\newtheorem{example}{Example}
\newtheorem{problem}{Problem}
\tikzset{text/.default=}
\tikzset{node/.default=}
\newcommand{\cmark}{\textcolor{ForestGreen}{\ding{51}}}%
\newcommand{\xmark}{\textcolor{Maroon}{\ding{55}}}%
\newcolumntype{M}[1]{D{.}{.}{1.#1}}
\DeclareMathOperator*{\argmax}{arg\,max}
\pgfplotsset{compat=1.3}
\newcommand{\rw}[1]{{\textcolor{black}{{\bf}#1}}}
\newtheorem{lemma}{Lemma}
\newtheorem{theorem}{Theorem}
\newcommand{\opt}{\textsc{Opt}\xspace}
\newcommand{\greedy}{\textsc{Greedy}\xspace}
\newcommand{\ig}{\text{IG}\xspace}
\newcommand{\Real}{{\mathbb{R}}}
\newenvironment{my_list}{
   \begin{itemize}
   \setlength{\itemsep}{2.0pt}
   \setlength{\parskip}{-2pt}
   \setlength{\parsep}{-2pt}
}{
   \end{itemize}
}
\newcommand{\smallsection}[1]{\vspace{1mm}\noindent\textbf{#1.}}	
\let\oldnl\nl
\newcommand{\nonl}{\renewcommand{\nl}{\let\nl\oldnl}}
\begin{document}
\renewcommand\Authfont{\large\bfseries}
\renewcommand\Affilfont{\small}
\renewcommand\Authsep{\quad}
\renewcommand\Authand{\quad}
\renewcommand\Authands{\quad}
%
\title{Crowd Access Path Optimization: Diversity Matters}
\author[1]{Besmira Nushi}
\author[1]{Adish Singla}
\author[1]{Anja Gruenheid}
\author[2]{Erfan Zamanian} 
\author[1]{\authorcr Andreas Krause}
\author[1,3]{Donald Kossmann}

\affil[1]{ETH Zurich, Department of Computer Science, Switzerland}
\affil[2]{Brown University, Providence, USA }
\affil[3]{Microsoft Research, Redmond, WA, USA }
\maketitle
\begin{abstract}
\begin{quote}
Quality assurance is one the most important challenges in crowdsourcing.
Assigning tasks to several workers to increase quality through redundant answers can be expensive if asking homogeneous
sources. This limitation has been overlooked by current crowdsourcing platforms resulting therefore in costly solutions.
In order to achieve desirable cost-quality tradeoffs it is essential to apply efficient crowd access optimization
techniques. Our work argues that optimization needs to be aware of diversity and correlation of information within
groups of individuals so that crowdsourcing redundancy can be adequately planned beforehand. Based on this intuitive
idea, we introduce the Access Path Model (APM), a novel crowd model that leverages the notion of access paths as an
alternative way of retrieving information. APM aggregates answers ensuring high quality and meaningful confidence.
Moreover, we devise a greedy optimization algorithm for this model that finds a provably good approximate plan to access
the crowd. We evaluate our approach on three crowdsourced datasets that illustrate various aspects of the problem. Our
results show that the Access Path Model combined with greedy optimization is cost-efficient and practical to overcome
common difficulties in large-scale crowdsourcing like data sparsity and anonymity.
\end{quote} 
\end{abstract} 
\section{Introduction}
Crowdsourcing has attracted the interest of many research communities
such as database systems, machine learning, and human computer interaction
because it allows humans to collaboratively solve problems that are difficult to handle with machines only.
Two crucial challenges in crowdsourcing independent of the field of application are (i) \emph{quality assurance} and (ii) \emph{crowd access optimization}.
Quality assurance provides strategies that proactively plan and ensure the
quality of algorithms run on top of crowdsourced data. Crowd access optimization
then supports quality assurance by carefully selecting from a large pool the
crowd members to ask under limited budget or quality
constraints. In current crowdsourcing platforms, redundancy
(\emph{i.e.}~assigning the same task to multiple workers) is the most common and straightforward way to guarantee quality \cite{karger2011budget}.
Simple as it is, redundancy can be expensive if used without any target-oriented approach, especially if the errors of
workers show dependencies or are correlated.
Asking people whose answers are expected to converge to the same opinion is neither efficient nor insightful. 
For example, in a sentiment analysis task, one would prefer to consider opinions from different non-related groups of interests before forming a decision. 
This is the basis of the diversity principle introduced by \cite{surowiecki2005wisdom}. 
The principle states that the best answers are achieved from discussion and contradiction rather than agreement and consensus.

In this work, we incorporate the diversity principle in a novel crowd model, named \textbf{Access Path Model} (APM),
which seamlessly tackles quality assurance and crowd access optimization and is applicable in a wide range of use cases.
It explores crowd diversity not on the individual worker level but on the common dependencies of workers while performing a task. 
In this context, an \emph{access path} is a way of retrieving a piece of information from the crowd. 
The configuration of access paths can be based on various criteria depending on the task: 
(i) workers' demographics (\emph{e.g.} profession, group of interest, age) (ii)  the source of information or the tool
that is used to find the answer (\emph{e.g.} phone call vs. web page, Bing vs. Google) (iii) task design (\emph{e.g.}
time of completion, user interface) (iv) task decomposition (\emph{e.g.} part of the answers, features).
\begin{example}
\label{introduction.example}
Peter and Aanya natively speak two different languages which they would like to teach to their young children. 
At the same time, they are concerned how this multilingual environment affects the learning abilities of their children. 
More specifically, they want to answer the question ``Does raising children bilingually cause language delay?''. 
To resolve their problem, they can ask three different groups of people (access paths):
\end{example}
\begin{table}[ht!]
	\footnotesize
	\centering
	\begin{tabular}{lcc}
		\midrule
		\bf{Access Path}				& \bf{Error rate}	& \bf{Cost}	\\
		\midrule
		Pediatricians				& 10\%				& \$20		\\ \hline 
		Logopedists					& 15\%				& \$15		\\\hline
		Other parents				& 25\%				& \$10		\\\hline
	\end{tabular}
	\caption{Access path configuration for Example 1}
\end{table}
Figure~\ref{fig:IntroExample} illustrates the given situation with respect to
the Access Path Model. In this example, each of the groups approaches the
problem from a different perspective and has different associated error rates and costs.
Considering that Peter and Aanya have a limited budget to spend and can ask more than one person on the same access path, they are interested in finding the optimal combination of access paths that will give them the most insightful information for their budget constraints. 
Throughout this paper, a combination of access paths will
be referred to as an \emph{access plan} and it defines how many different people
to ask on each available access path. Our model aims at helping general
requesters in crowdsourcing platforms to find optimal access plans and
appropriately aggregate the collected data. Results from
 experiments on real-world crowdsourcing show that a pre-planned combination
of diverse access paths indeed overperforms pure (\emph{i.e.} single access path) access plans, random selection, and equal
distribution of budget across access paths. The main finding is that diversity is a powerful mean that matters for
quality assurance.
\begin{figure}[t]
\centering
\resizebox{\columnwidth}{!}{
\begin{tikzpicture}
\tikzstyle{text}=[draw=none,fill=none]
\tikzstyle{main}=[circle, minimum size = 7mm, thick, draw =black!80, node distance = 16mm]
\tikzstyle{ap}=[rectangle, minimum width = 27mm,minimum height = 5mm, rounded corners, thick, draw =black!80, node distance = 16mm]
\tikzstyle{connect}=[-latex, thick]
\tikzstyle{box}=[rectangle, draw=black!100,  dotted]
\tikzset{callout/.style=
 {rectangle callout,
  fill=gray!20,
  callout absolute pointer={#1},
  at={#1},
  above=1cm,
  draw  
  }}

  \node (Y)  {\includegraphics[width=1.5cm]{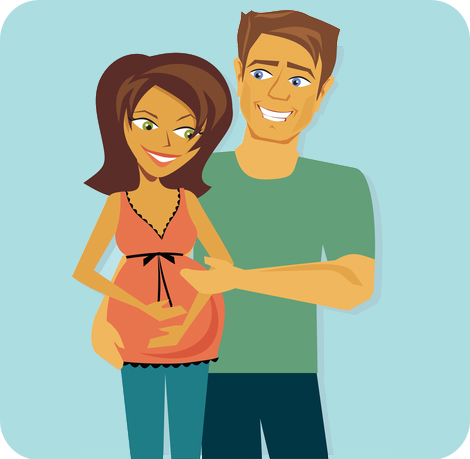}};
  \node[ap, fill = black!10] (Z2) [below=0.4 cm of Y] {Logopedists};
  \node[ap, fill = black!10] (Z1) [left=1.2 cm of Z2] {Pediatricians};
  \node[ap, fill = black!10] (Z3) [right=1.4 cm of Z2] {Parents};
  
  \node[text] (X12) [below= 0.9cm of Z1,inner sep=0pt,label=below:$\ldots$] {};
  \node (X11) [left=0.25cm of X12,inner sep=0pt] {\includegraphics[width=1.2cm]{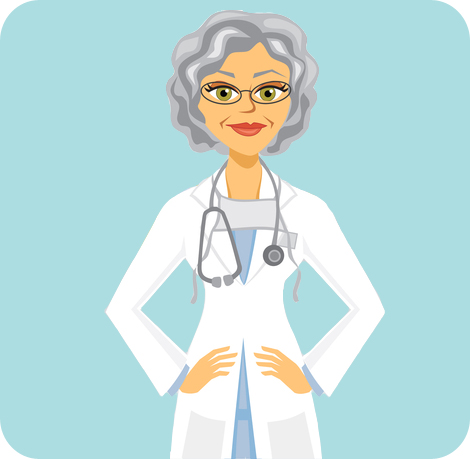}};
  \node (X1N) [right=0.25cm of X12,inner sep=0pt] {\includegraphics[width=1.2cm]{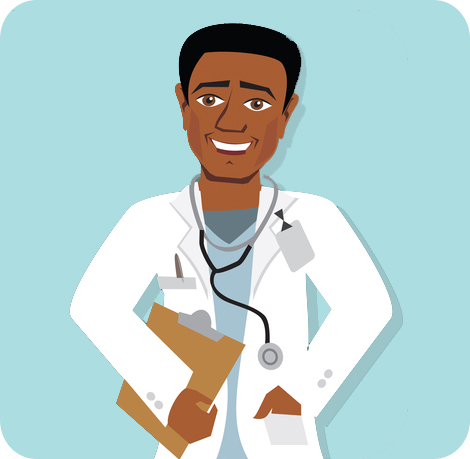}};

  \node[text] (X22) [below= 0.9cm of Z2,inner sep=0pt,label=below:$\ldots$] {};
  \node (X21) [left=0.25cm of X22,inner sep=0pt] {\includegraphics[width=1.2cm]{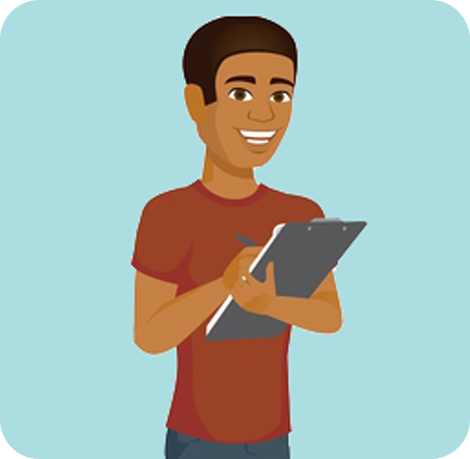}};
  \node (X2N) [right=0.25cm of X22,inner sep=0pt] {\includegraphics[width=1.2cm]{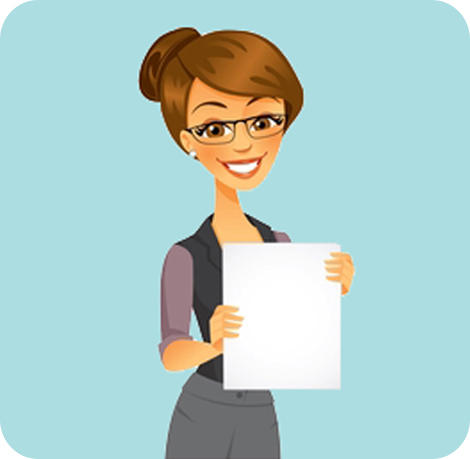}};

  \node[text] (X32) [below= 0.9cm of Z3,inner sep=0pt,label=below:$\ldots$] {};
  \node (X31) [left=0.25cm of X32,inner sep=0pt] {\includegraphics[width=1.2cm]{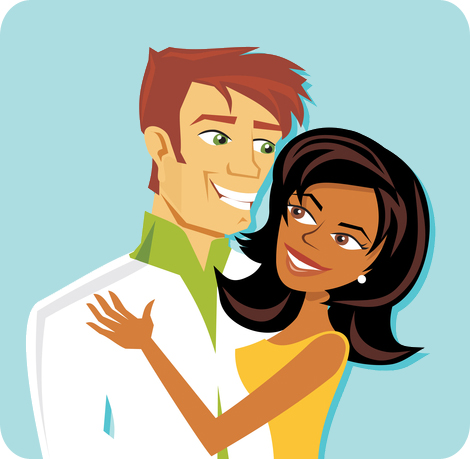}};
  \node (X3N) [right=0.25cm of X32,inner sep=0pt] {\includegraphics[width=1.2cm]{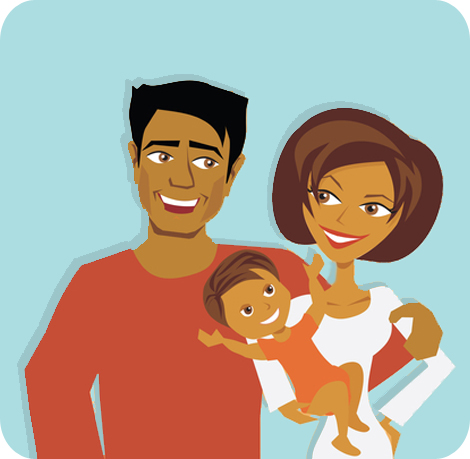}};
  \node[callout={(1,.5)}] (callout) [above right=-1.0cm and 0.2cm of Y] {\small
  \begin{tabular}{c}
  Does raising children\\
  bilingually cause\\
  language delay?\\
  \end{tabular}};
  \node[text] (q1) [above= 1.4cm of X1N,inner sep=0pt] {\bf ?};
  \node[text] (q2) [above right= 1.6cm and 0.1cm of X22,inner sep=0pt] {\bf ?};
  \node[text] (q3) [above= 1.4cm of X31,inner sep=0pt] {\bf ?};
  
  \path (Y) edge [connect] (Z1)
        (Y) edge [connect] (Z2)
        (Y) edge [connect] (Z3)
        (Z1) edge [connect] (X11)
        (Z1) edge [connect] (X12)
        (Z1) edge [connect] (X1N)
        (Z2) edge [connect] (X21)
        (Z2) edge [connect] (X22)
        (Z2) edge [connect] (X2N)
        (Z3) edge [connect] (X31)
        (Z3) edge [connect] (X32)
        (Z3) edge [connect] (X3N);
\end{tikzpicture}
}
\caption{APM for crowdsourcing a medical question}
\label{fig:IntroExample} 
\end{figure}
\subsection{Contributions}
Previous work on quality assurance and crowd access optimization focuses on two different approaches: majority-based
strategies and individual models. Majority voting is oblivious to personal characteristics of crowd workers and is
therefore limited in terms of optimization. Individual models instead base their decisions on the respective performance
of each worker targeting those with the best accuracy \cite{dawid1979maximum,whitehill2009whose}.
These models are useful for spam detection and pricing schemes but do not guarantee answer diversity and might fall into
partial consensus traps. 

As outlined in Table~\ref{table:apm_comparison}, the APM is a middle-ground solution between these two choices and offers several advantages.
First, it is aware of answer diversity which is particularly important for requests without an established ground truth.
Second, since it manages group-based answer correlations and dependencies, it facilitates efficient optimization of
redundancy. Third, the APM is a practical model for current crowdsourcing marketplaces where due to competition the
availability of a particular person is never guaranteed or authorships may be hidden for privacy reasons. Last, its
predictions are mapped to meaningful confidence levels which can simplify the interpretation of results.

\begin{table}[h!]
	\footnotesize
	\centering
	\begin{tabular}{|m{18mm}|c|c|c|}	
		\cline{2-4}
		 \multicolumn{1}{c|}{}		& Majority		 	& Individual			& Access Path \\ 
		 \multicolumn{1}{c|}{}		& Voting 		 	& Models				& Model \\ \hline
		Diversity awareness 		& \Large{\xmark} 	& \Large{\cmark}		& \Large{\cmark}		\\ \hline
		Cost-efficient optimization & \Large{\xmark}	& \Large{\xmark} 		& \Large{\cmark}		\\\hline 
		Sparsity  Anonymity   		& \Large{\cmark}	& \Large{\xmark} 		& \Large{\cmark}		\\\hline 
		Meaningful confidence 		& \Large{\cmark}	& \Large{\xmark} 		& \Large{\cmark}		\\\hline 
	\end{tabular}
	\caption{Comparison of APM with current approaches.}
	\label{table:apm_comparison}
\end{table}
In summary, this work makes the following contributions:
\begin{my_list}
\item \textbf{Modeling the crowd for quality assurance.} We design the Access
Path Model as a Bayesian Network that through the usage of latent variables is able to capture and utilize
crowd diversity from a non-individual
point of view. The APM can be applied even if the data is sparse and
crowd workers are anonymous.
\item \textbf{Crowd access optimization.} We use an information-theoretic objective for crowd access optimization. We
prove that our objective is submodular, allowing us to adopt efficient greedy algorithms with strong
guarantees.
\item \textbf{Real-world experiments.} Our extensive experiments cover three different domains: 
Answering medical questions, sport events prediction and bird species classification. 
We compare our model and optimization scheme with state of the art techniques and show that it makes robust predictions with lower cost. 
\end{my_list}
\section{Problem Statement}
\label{sec.problemstatement}
In this work, we identify and address two closely related problems:
(1) modeling and aggregating diverse crowd answers which we call the {\it crowdsourced
predictions problem}, and (2) optimizing the budget distribution for better quality referred to as {\it access path
selection problem}.
\begin{problem}[\textsc{Crowdsourced Predictions}]
\label{problem1}
Given a task represented by a random variable $Y$, and a set of answers from $W$ workers represented by
random variables $X_1, \ldots, X_W$, the crowdsourced prediction problem is to find a high-quality prediction of the
outcome of task $Y$ by aggregating these votes.
\end{problem}
\smallsection{\bfseries Quality criteria} A high-quality prediction is not only accurate but should also be linked
to a meaningful confidence score which is formally defined as the likelihood of the prediction to be correct. This
property simplifies the interpretation of predictions coming from a probabilistic model. For example, if a doctor wants to know
whether a particular medicine can positively affect the improvement of a disease condition, providing a raw
\emph{yes/no} result answer is not sufficiently informative. Instead, it is much more
useful to associate the answer with a trustable confidence score. 

\smallsection {\bfseries Requirements and challenges} To provide high quality
predictions, it is essential to precisely represent the crowd. The main
aspects to be represented are (i) the conditional dependence of worker answers within access paths given the task and
(ii) the conditional independence of worker answers across access paths. As we will show in this paper, modeling such dependencies is also crucial for efficient optimization.
Another realistic requirement concerns the support for data \emph{sparsity} and \emph{anonymity}. Data sparsity is common in crowdsourcing \cite{venanzi2014community} and occurs when the number of tasks that workers solve is not sufficient
to estimate their errors which can negatively affect quality. In other cases, the identity of workers is not available,
but it is required to make good predictions based on non-anonymized features. 
\begin{problem}[\textsc{Access Path Selection}]
\label{problem2}
Given a task represented by a random variable $Y$, that can be solved by the crowd following $N$ different access paths
denoted with the random variables $Z_1, \ldots, Z_N$, using a maximum budget $B$, the access path selection problem is
to find the best possible access plan $S_{best}$ that leads to a high-quality prediction of the outcome of task $Y$.
\end{problem}
An access plan defines how many different people are chosen to complete the task from each access path.
In Example~\ref{introduction.example}, we will ask one pediatrician, two logopedists and three different parents if the
access plan is $S=[1, 2, 3]$. Each access plan is associated with a cost $c(S)$ and quality $q(S)$.
For example, ~$c(S) = \sum_{i=1}^{3} c_i \cdot S[i] = \$80$ where $c_i$ is the cost of getting one single answer through
access path $Z_i$. In these terms, the access path selection problem can be generally formulated as:
\begin{equation}
\label{eq:apselectionproblem}
S_{best} = \argmax_{S \in \mathcal{S}} q(S) \text{ \small{s.t.} }
\sum_{i=1}^{N} c_i \cdot S[i] \leq B
\end{equation}
This knapsack maximization problem is NP-Hard even for submodular functions \cite{1998-_feige_threshold-of-ln-n}.
Hence, designing bounded and efficient approximation schemes is
useful for realistic crowd access optimization.
\section{Access Path Model}
\label{sec.model}
The crowd model presented in this section aims at fulfilling the requirements specified in the definition of
Problem~\ref{problem1} (\textsc{Crowdsourced Prediction}) and enables our method
to learn the error rates from historical data and then accordingly aggregate
worker votes.\label{subsec:baseline}
\subsection{Access Path Design}
\label{sec:design}
Due to the variety of problems possible to
crowdsource, an important step concerns the design of access paths. The
access path notion is a broad concept that can accommodate various situations
and may take different shapes depending on the task. Below we describe a list of viable configurations that can be easily applied
in current platforms.
\begin{my_list}
\item {\bfseries Demographic groups.} Common demographic characteristics
(location, gender, age) can establish strong statistical
dependencies of workers' answers \cite{kazai2012face}. Such groups are
particularly diverse for problems like sentiment analysis or product evaluation
and can be retrieved from crowdsourcing platforms as part of the task, worker
information, or qualification tests.
\item {\bfseries Information sources.} For data collection
and integration tasks, the data source being used to deduplicate or match
records (addresses, business names \emph{etc.}) is the primary cause of error or
accuracy \cite{pochampally2014fusing}.
\item {\bfseries Task design.} In other cases, the answer of a worker may
be psychologically affected by the user interface design.
For instance, in crowdsourced sorting, a worker may rate
the same product differently depending on the scaling system (stars, 1-10
\emph{etc.}) or other products that are part of the same
batch \cite{parameswaran2014optimal}.
\item {\bfseries Task decomposition.} Often, complicated problems are
decomposed into smaller ones. Each subtask type can serve as an access path. For
instance, in the bird classification task that we study later in our experiments, workers can resolve separate features
of the bird (\emph{i.e.} color, beak shape \emph{etc.}) rather than its category.
\end{my_list}
In these scenarios, the access path definition
natively comes with the problem or the task design. However, there are scenarios where the structure is
not as evident or more than one grouping is applicable. Helpful tools in this regard include graphical model
structure learning based on conditional
independence tests \cite{de2006scoring} and information-theoretic group selection \cite{wisdomminority}.

\smallsection{Architectural implications} We envision access path design as
part of the quality assurance and control module for new crowdsourcing
frameworks or, in our case, as part of the query engine in a crowdsourced
database \cite{franklin2011crowddb}.
In the latter context, the notion of access paths is one of the main pillars in query optimization for traditional databases
\cite{selinger1979access} where access path selection (\emph{e.g.} sequential
scan or index) has significant impact on the query response time. In addition,
in a crowdsourced database the access path selection also affects the quality
of query results. In such an architecture, the query optimizer is responsible
for (i) determining the optimal combination of access paths as shown in the
following section, and (ii) forwarding the design to the UI creation. The query
executor then collects the data from the crowd and aggregates it through the
probabilistic inference over the APM.
\subsection{Alternative models}
\label{sec.stateoftheart}
Before describing the structure of the Access Path Model, we first have a look at other alternative models and their
behavior with respect to quality assurance. Table~\ref{accronym.table} specifies the meaning of each symbol as used
throughout this paper.

\smallsection{Majority Vote (\textsc{MV})} Being the simplest of the models and
also the most popular one, majority voting is able to produce fairly good
results if the crowdsourcing redundancy is sufficient. Nevertheless,
majority voting considers all votes as equal with respect to quality and
can not be integrated with any optimization scheme other than random selection.

\smallsection{Na\"{i}ve Bayes Individual (\textsc{NBI})} This model assigns
individual error rates to each worker and uses them to weigh the incoming votes and form a decision (Figure~\ref{fig:NBI}). In cases when
the ground truth is unknown, the error estimation is carried out through an EM Algorithm as proposed
by \cite{dawid1979maximum}. Aggregation (\emph{i.e.}
selecting the best prediction) is then performed through Bayesian inference. For
example, for a set of votes $x_t$ coming from $W$ different workers $X_1, \dots, X_W$ the most likely outcome among all
candidate outcomes $y_c$ is computed as $\text{prediction} = \argmax_{y_c \in Y} {p(y_c|x_t)}$,
whereas the joint probability of a candidate answer $y_c$ and the votes $x_t$ is:
\begin{gather}
p(y_c,x_t) = p(y)\prod_{w=1}^{W}p(x_{wt}|y_c)
\label{eq:nbiinf}
\end{gather}
The quality of predictions for this model highly depends on the assumption that each worker has solved a fairly
sufficient number of tasks. This assumption
generally does not hold for open crowdsourcing markets where stable participation of workers is not guaranteed. As we
show in the experimental evaluation, this is harmful not only for estimating the error rates but also for crowd access
optimization because access plans might not be imlplementable or have a high response time.
Furthermore, even in cases of fully committed workers, NBI does not provide the proper logistics to optimize the
budget distribution since it does not capture the shared dependencies between the workers. Last, due to the Na\"{i}ve Bayes
inference which assumes conditional independence between each pair of workers, predictions of this model are generally
overconfident.
\begin{table}[t]
\footnotesize
\centering
	\begin{tabular}{ll}
		\hline
		\bf{Symbol}				& \bf{Description}	\\ \hline
		$Y$						& random variable of the crowdsourced task	\\ \hline
		$X_w$					& random variable of worker $w$ 		\\\hline
		$W$						& number of workers		\\\hline		
		$Z_i$					& latent random variable of access path $i$		\\\hline
		$X_{ij}$				& random variable of worker $j$ in access path $i$	\\\hline
		$N$						& number of access paths		\\\hline
		$B$						& budget constraint		\\\hline
		$S$						& access plan		\\\hline
		$S[i]$					& no. of votes from access path $i$ in plan $S$		\\\hline		
		$c_i$					& cost of access path $i$		\\\hline
		$D$						& training dataset		\\\hline
		$s<y,x>$				& instance of task sample in a dataset 	\\\hline
		$\theta$				& parameters of the Access Path Model		\\\hline
	\end{tabular}
	\caption{Symbol description}
	\label{accronym.table}
\end{table}

\subsection{Access Path based models}
Access Path based models group the answers of the crowd according to the
access path they originate from. We first describe a simple Na\"{i}ve Bayes
version of such a model and then elaborate on the final design of the APM.

\smallsection{Na\"{i}ve Bayes for Access Paths (NBAP)} 
For correcting the effects of non-stable participation of individual workers we first consider another alternative,
similar to our original model, presented in Figure~\ref{fig:NBAP}.
The votes of the workers here are grouped according to the access path. For inference purposes then, each vote $x_{ij}$
is weighed with the average error rate $\theta_i$ of the access path it comes
from. In other words, it is assumed that all workers within the same access path
share the same error rate. As a result, all votes belonging to the same access path behave as a single random
variable, which  enables the model to support highly sparse data. Yet, due to the similarity with NBI and all Na\"{i}ve
Bayes classifiers, NBAP cannot make predictions with meaningful confidence especially when there exists a large number of access paths. 

\smallsection{Access Path Model overview} Based on the analysis of previous
models, we propose the Access Path Model as presented in Figure~\ref{fig:NetExample}, which shows an instantiation for three access paths.
We design the triple $<$task, access path, worker$>$ as a hierarchical Bayesian
Network in three layers. 
\begin{figure}[t]
\centering
\begin{tikzpicture}
\footnotesize
\tikzstyle{text}=[draw=none,fill=none]
\tikzstyle{main}=[circle, minimum size = 5mm, thick, draw =black!80, node distance = 16mm]
\tikzstyle{connect}=[-latex, thick]
\tikzstyle{box}=[rectangle, draw=black!100,  dotted]
  \node[main] (Y) [label=right:$Y$] { };

  \node[main] (X22) [below=0.4 cm of Y,label=below:$X_w$] { };
  \node[text] (X21) [left=0.3 cm of X22,label=below:$\ldots$] { };
  \node[text] (X2N) [right=0.3 of X22,label=below:$\ldots$] { };

  \node[main] (X1N) [left=0.4 cm of X21,label=below:$X_{2}$] { };
  \node[text] (X12) [left=0.3 cm of X1N,label=below:$$] { };
  \node[main] (X11) [left=0.3 cm of X12,label=below:$X_{1}$] { };

  \node[main] (X31) [right=0.4 cm of X2N,label=below:$X_{W-1}$] { };
  \node[text] (X32) [right=0.3 cm of X31,label=below:$$] { };
  \node[main] (X3N) [right=0.3 cm of X32,label=below:$X_{W}$] { };

  \path
        (Y) edge [connect] (X11)
        (Y) edge [connect] (X1N)
        (Y) edge [connect] (X22)
        (Y) edge [connect] (X31)
        (Y) edge [connect] (X3N);
\end{tikzpicture}
\caption{Na\"{i}ve Bayes Individual - NBI.}
\label{fig:NBI}
\end{figure}
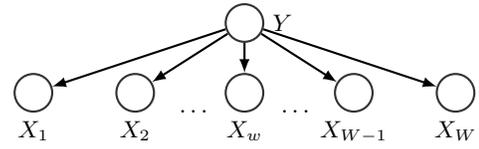
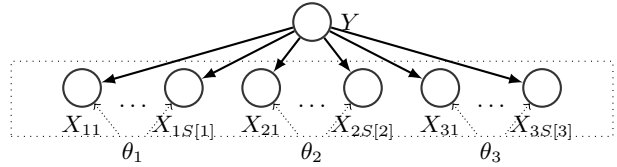
\begin{figure}[t]
\centering
\begin{tikzpicture}
\footnotesize
\tikzstyle{text}=[draw=none,fill=none]
\tikzstyle{main}=[circle, minimum size = 5mm, thick, draw =black!80, node distance = 16mm]
\tikzstyle{connect}=[-latex, thick]
\tikzstyle{box}=[rectangle, draw=black!100,  dotted]
  \node[main] (Y) [label=right:$Y$] { };

  \node[text] (X22) [below=0.5 cm of Y,label=below:$\ldots$] { };
  \node[main] (X21) [left=0.3 cm of X22,label=below:$X_{21}$] { };
  \node[main] (X2N) [right=0.3 of X22,label=below:$X_{2S[2]}$] { };

  \node[main] (X1N) [left=0.5 cm of X21,label=below:$X_{1S[1]}$] { };
  \node[text] (X12) [left=0.3 cm of X1N,label=below:$\ldots$] { };
  \node[main] (X11) [left=0.3 cm of X12,label=below:$X_{11}$] { };

  \node[main] (X31) [right=0.5 cm of X2N,label=below:$X_{31}$] { };
  \node[text] (X32) [right=0.3 cm of X31,label=below:$\ldots$] { };
  \node[main] (X3N) [right=0.3 cm of X32,label=below:$X_{3S[3]}$] { };

  \node[text] (th2) [below=0.5 cm of X22, font=\small] { $\theta_2$};
  \node[text] (th1) [below=0.5 cm of X12, font=\small] {$\theta_1$ };
  \node[text] (th3) [below=0.5 cm of X32, font=\small] { $\theta_3$};

 \node[box] (W) [below=0.25 cm of Y, label=right:, minimum width = 80 mm, minimum
 height = 10 mm]{};

  \path
        (Y) edge [connect] (X11)
        (Y) edge [connect] (X1N)
        (Y) edge [connect] (X21)
        (Y) edge [connect] (X2N)
        (Y) edge [connect] (X31)
        (Y) edge [connect] (X3N);
        
 \draw [densely dotted] (th1) [->] -- (X11);
 \draw [densely dotted] (th1) [->] -- (X1N);
 \draw [densely dotted] (th2) [->] -- (X21);
 \draw [densely dotted] (th2) [->] -- (X2N); 
 \draw [densely dotted] (th3) [->] -- (X31);
 \draw [densely dotted] (th3) [->] -- (X3N); 
\end{tikzpicture}
\caption{Na\"{i}ve Bayes Model for Access Paths - NBAP.}
\label{fig:NBAP}
\end{figure}
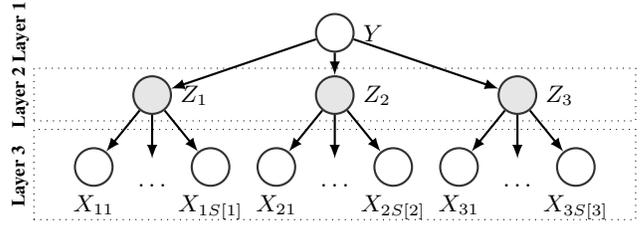
\begin{figure}[t!]
\centering
\begin{tikzpicture}
\footnotesize
\tikzstyle{text}=[draw=none,fill=none]
\tikzstyle{main}=[circle, minimum size = 5mm, thick, draw =black!80, node distance = 16mm]
\tikzstyle{connect}=[-latex, thick]
\tikzstyle{box}=[rectangle, draw=black!100,  dotted]
  \node[main] (Y) [label=right:$Y$] {};
  \node[main, fill = black!10] (Z2) [below=0.3 cm of Y,label=right:$Z_2$] {};
  \node[main, fill = black!10] (Z1) [left=1.9 cm of Z2,label=right:$Z_1$] {};
  \node[main, fill = black!10] (Z3) [right=1.9 cm of Z2,label=right:$Z_3$] {};
  
  \node[text] (X12) [below=0.58 cm of Z1,label=below:$\ldots$] {};
  \node[main] (X11) [left=0.4 cm of X12,label=below:$X_{11}$] {};
  \node[main] (X1N) [right=0.4 of X12,label=below:$X_{1S[1]}$] {};

  \node[text] (X22) [below=0.58 cm of Z2,label=below:$\ldots$] {};
  \node[main] (X21) [left=0.4 cm of X22,label=below:$X_{21}$] {};
  \node[main] (X2N) [right=0.4 of X22,label=below:$X_{2S[2]}$] {};

  \node[text] (X32) [below=0.58 cm of Z3,label=below:$\ldots$] {};
  \node[main] (X31) [left=0.4 cm of X32,label=below:$X_{31}$] {};
  \node[main] (X3N) [right=0.4 of X32,label=below:$X_{3S[3]}$] {};
  
 \node[box] (A) [below=0.2 cm of Y, label=right:, minimum width = 80 mm,
 minimum height = 7 mm]{}; 
 \node[box] (W) [below=0.1 cm of A, label=right:, minimum width = 80 mm,
 minimum height = 12 mm]{};
  
  \path (Y) edge [connect] (Z1)
        (Y) edge [connect] (Z2)
        (Y) edge [connect] (Z3)
        (Z1) edge [connect] (X11)
        (Z1) edge [connect] (X12)
        (Z1) edge [connect] (X1N)
        (Z2) edge [connect] (X21)
        (Z2) edge [connect] (X22)
        (Z2) edge [connect] (X2N)
        (Z3) edge [connect] (X31)
        (Z3) edge [connect] (X32)
        (Z3) edge [connect] (X3N);
  
  \node[text](L3)[below left=-1.10cm and 0.4cm of W,
  font=\scriptsize,rotate=90] {\bfseries{Layer 3}};
   \node[text](L2)[above right=0.02cm and 0.45cm of L3,
  font=\scriptsize,rotate=90] {\bfseries{Layer 2}};
  \node[text](L1)[above right=-0.17cm and 0.45cm of L2,
  font=\scriptsize,rotate=90] {\bfseries{Layer 1}};
\end{tikzpicture}
\caption{Bayesian Network Model for Access Paths - APM.}
\label{fig:NetExample}
\end{figure}

\noindent {\bfseries Layer 1.} Variable $Y$ in the root of the model represents
the random variable modeling the real outcome of the task. 

\noindent {\bfseries Layer 2.} This layer contains the random variables modeling the access paths $Z_1, Z_2, Z_3$. Each
access path is represented as a latent variable, since its values are not observable. Due to the tree structure, every
pair of access paths is conditionally independent given $Y$ while the workers that belong to the same access path are
not. The conditional independence is the key of representing diversity by implementing therefore various probabilistic
channels. Their purpose is to distinguish the information that can be obtained from the workers from the one that comes
from the access path.

Such enhanced expressiveness of this auxiliary layer over the previously described NBAP model avoids
overconfident predictions as follows. Whenever a new prediction is made, the amount of confidence that identical answers from different
workers in the same access path can bring is first blocked by the access path usage (\emph{i.e.} the latent variable). If the number of
agreeing workers within the same access path increases, confidence increases as well but not at the same rate as it
happens with NBI. Additional workers contribute only with their own signal, while the access path
signal has already been taken into consideration. In terms of optimization, this
property of the APM makes a good motivation for combining various access paths within the same plan.

\noindent {\bfseries Layer 3.} The lowest layer contains the random
variables $X$  modeling the votes of the workers grouped by the access path they
are following. For example, $X_{ij}$ is the $j$-th worker on the $i$-th
access path. The incoming edges represent the error rates of workers conditioned by their access paths.

\smallsection{Parameter learning} 
The purpose of the training stage is to learn the parameters of the model, \emph{i.e.}
the conditional probability of each variable with respect to its parents that are graphically represented by the network
edges in Figure~\ref{fig:NetExample}. We will refer to the set of all model
parameters as $\theta$. More specifically, $\theta_{Z_i|Y}$ represents the
table of conditional error probabilities for the $i$-th access path given the
task $Y$, and $\theta_{X_{ij}|Z_i}$ represents the table of conditional
error probabilities for the $j$-th worker given the $i$-th access path.

For a dataset $D$ with historical data of the same type of task, the
parameter learning stage finds the maximum likelihood estimate
$\theta_{\mathit{MLE}} = \argmax_{\theta} p(D|\theta)$. According to our model, the joint probability of a sample $s_k$
factorizes as:
\begin{equation}
p(s_k|\theta) = p(y_k|\theta) \prod_{i=1}^{N}\Big( p(z_{ik}|y_k,\theta)\prod_{j=1}^{S_{k}[i]}p(x_{ijk}|z_{ik},\theta) \Big)
\end{equation}
where $S_{k}[i]$ is the number of votes in access path $Z_i$ for the sample.
As the access path variables $Z_i$ are not
observable, we apply an Expectation Maximization (EM) algorithm \cite{dempster1977maximum} to find the best parameters. 
Notice that applying EM for the network model in Figure~\ref{fig:NetExample} will learn the parameters for each
worker in the crowd. This scheme works if the set of workers involved in the task is sufficiently stable
to provide enough samples for computing their error rates 
(\emph{i.e.} $\theta_{X_{ij}|Z_i}$) and if the worker id is not hidden.
As in many of the crowdsourcing applications (as well as in our experiments) this is not always the
case, we share the parameters of all workers within an access path.
This enables us to later apply on the model an optimization scheme agnostic
about the identity of workers.
The generalization is optional for the APM and obligatory for NBAP. 

\smallsection{Training cost analysis}
The amount of data needed to train the APM is significantly lower than
what individual models require which results in a faster learning process.
The reason is that the APM can benefit even from infrequent participation of
individuals $X_{ij}$ to estimate accurate error rates for access paths $Z_i$. Moreover, sharing the parameters of
workers in the same access path reduces the number of parameters to learn from $W$ for individual models to $2N$ for
the APM which is typically several orders of magnitude lower.

\smallsection{Inference} After parameter learning, the model is used to infer
the outcome of a task using the available votes on each access path. As in previous models, the inference step computes the likelihood of each candidate outcome $y_c \in Y$
given the votes in the test sample $x_t$ and chooses the most likely candidate
as $\text{prediction} = \argmax_{y_c \in Y} {p(y_c|x_t)}$. As the test samples
contain only the values for the variables $X$, the joint probability between the candidate outcome and the test sample is
computed by marginalizing over all possible values of $Z_i$ (Eq. \ref{eq:marginalization}). For a fixed cardinality of $Z_i$, the complexity of inferring the most
likely prediction is $\mathcal{O}(NM)$.
\begin{equation}
\label{eq:marginalization}
p(y_c,x_t) = p(y_c) \prod_{i=1}^{N} \Big( \sum_{z \in \{0,1\}} p(z|y_c)
\prod_{j=1}^{S_{t}[i]} p(x_{ijt}|z)\Big)
\end{equation}
The confidence of the prediction maps to the likelihood that
the prediction is accurate $p(\text{prediction}|x_t)$. Marginalization in
Equation \ref{eq:marginalization} is the technical step that avoids overconfidence by smoothly blocking the confidence increase when similar answers from the same access path are observed.
\section{Crowd Access Optimization}
\label{sec.accesspathselection}
Crowd access optimization is crucial for both paid and non-paid of crowdsourcing.
While in paid platforms the goal is to acquire the best quality for the given monetary budget, in
non-paid applications the necessity for optimization comes from the fact that
highly redundant accesses might decrease user satisfaction and increase latency.
In this section, we describe how to estimate the quality of access plans and
how to choose the plan with the best expected quality.
\subsection{Information Gain as a measure of quality}
The first step of crowd access optimization is estimating the quality of access
plans before they are executed. One attempt might be to quantify the {\em accuracy}
of individual access paths in isolation, and choose an objective function that prefers the selection of more accurate
access paths. However, due to statistical dependencies of responses within an access path (\emph{e.g.}, correlated
errors in the workers' responses), there is diminishing returns in repeatedly selecting a single access path.  To
counter this effect, an alternative would be to define the quality of an access plan as a measure of {\em diversity} \cite{hui2015hear}. For example, we might prefer to equally distribute the budget
across access paths.  However, some access paths may be very uninformative / inaccurate, and optimizing diversity alone
will waste budget. Instead, we use the joint \emph{information gain}
$\text{IG}(Y;S)$ of the task variable $Y$ in our model and an access plan $S$ as a measurement of plan quality as well as an objective function for our optimization
scheme. Formally, this is is defined as:
\begin{equation}\label{eq.informationgain} 
\text{IG}(Y;S)=H(Y) - H(Y|S)
\end{equation}
An access plan $S$ determines how many variables $X$  to choose from each
access path $Z_i$. $\text{IG}(Y;S)$ measures the entropy reduction (as measure of
uncertainty) of the task variable $Y$ after an access plan $S$ is observed. At the beginning, selecting from the most
accurate access paths provides the highest uncertainty reduction. However, if better access paths are
exhausted (\emph{i.e.}, accessed relatively often), asking on less accurate ones reduces the
entropy more than continuing to ask on previously explored paths. This situation reflects the way how
information gain explores diversity and
increases the prediction confidence if evidence is retrieved from
independent channels. Based on this analysis, information gain naturally trades accuracy and diversity. While plans with high information gain do
exhibit diversity, this is only a means for achieving high predictive
performance.

\smallsection{Information gain computation}
\label{sec.igcomputation}
The computation of the conditional entropy $H(Y|S)$ as part of information gain in Equation~\ref{eq.informationgain} is
a difficult problem, as full calculation requires enumerating all possible
instantiations of the plan. Formally, the conditional
entropy can be computed as:
\begin{equation}\label{eq.conditionalH}
H(Y|S) = \sum_{y \in Y, x \in X_S} p(x,y) \log \frac {p(x)}{p(x,y)}
\end{equation}
$X_S$ refers to all the possible assignments that votes can take according to plan $S$.
We choose to follow the sampling approach presented in \cite{krause05near} which
randomly generates samples satisfying the access plan according to our Bayesian Network model. The
final conditional entropy will then be the average value of the conditional entropies of the generated samples.
This
method is known to provide absolute error guarantees for any desired level of
confidence if enough samples are generated. Moreover, it runs in polynomial time
if sampling and probabilistic inference can also be done in polynomial time. Both conditions are
satisfied by our model due to the tree-shaped configuration of the
Bayesian Network. They also hold for the Na\"{i}ve Bayes baselines as simpler
tree versions of the APM. 

\smallsection{Submodularity of information gain}
Next, we derive the submodularity property of our objective function based on information gain in
Equation~\ref{eq.informationgain}. The property will then be leveraged by the greedy optimization scheme in proving
constant factor approximation bounds. A submodular function is a function that satisfies the law of diminishing returns
which means that the marginal gain of the function decreases while incrementally adding more elements to the input
set. 

Let $\mathcal{V}$ be a finite set. A set function $F: 2^\mathcal{V} \rightarrow \mathbb{R}$ is submodular if $F(S \cup
\{v\}) - F(S) \geq F(S' \cup \{v\}) - F(S')$ for all $S \subseteq S' \subseteq \mathcal{V}$, $v \centernot\in S'$. For
our model, this intuitively means that collecting a new vote from the crowd adds more information when few votes have
been acquired rather than when many of them have already been collected. While information gain is non-decreasing and
non-negative, it may not be submodular for a general Bayesian Network. Information gain can be shown to be submodular
for the Na\"{i}ve Bayes Model for Access Paths (NBAP) in Figure~\ref{fig:NBAP} by applying the results from
\cite{krause05near}. Here, we prove its submodularity property for the APM Bayesian Network shown in
Figure~\ref{fig:NetExample}. Theorem~\ref{theorem.submodularity} formally states the result and below we describe a
short sketch of the proof\footnote{\label{note1}Full proofs available at http://arxiv.org/abs/1508.01951}.

\begin{theorem}\label{theorem.submodularity} \label{THMSUBM}
The objective function based on information gain in Equation~\ref{eq.informationgain} for the Bayesian Network Model for
Access Paths (APM) is submodular.
\end{theorem}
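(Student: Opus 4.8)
The plan is to prove the equivalent \emph{diminishing returns} form of submodularity. Writing $F(S)=\text{IG}(Y;S)=H(Y)-H(Y\mid X_S)$, and noting $H(Y)$ is constant, it suffices to show that for every $A\subseteq B$ and every candidate vote $v=X_{i^\ast j^\ast}\notin B$ the conditional mutual information $I(Y;X_v\mid X_A)=H(Y\mid X_A)-H(Y\mid X_A,X_v)$ is nonincreasing as the conditioning set grows, i.e.
$$F(A\cup\{v\})-F(A)=I(Y;X_v\mid X_A)\;\ge\;I(Y;X_v\mid X_B)=F(B\cup\{v\})-F(B).$$
I would organise the argument around the access path $i^\ast$ of the new vote $v$, splitting $A=A_0\cup A_1$ and $B=B_0\cup B_1$ into votes from \emph{other} access paths ($A_0,B_0$) and votes from path $i^\ast$ itself ($A_1,B_1$).

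First I would dispatch the case $B_1=\emptyset$, where no same-path votes are present. Here the tree structure of Figure~\ref{fig:NetExample} gives $X_v\perp X_B\mid Y$, since votes on distinct access paths are conditionally independent given $Y$. Hence $H(X_v\mid Y,X_A)=H(X_v\mid Y)=H(X_v\mid Y,X_B)$, and the standard rewriting $I(Y;X_v\mid X_A)=H(X_v\mid X_A)-H(X_v\mid Y)$ reduces the required inequality to $H(X_v\mid X_A)\ge H(X_v\mid X_B)$, which holds because conditioning on a superset never increases entropy. This is exactly the conditional-independence argument that establishes submodularity for NBAP via \cite{krause05near}, and it covers all marginal gains that mix \emph{different} access paths.

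The crux is the remaining case, where $A_1,B_1$ contain votes from the \emph{same} path $i^\ast$ as $v$. Now conditional independence given $Y$ fails: within-path votes are correlated through the latent $Z_{i^\ast}$, so $H(X_v\mid Y,X_B)\neq H(X_v\mid Y)$ and the clean reduction above breaks. The route I would take is to push the argument through $Z_{i^\ast}$: since $v$ is a leaf whose only parent is $Z_{i^\ast}$, we have $X_v\perp(\text{everything else})\mid Z_{i^\ast}$ with a \emph{fixed, shared} emission channel $p(x_v\mid z_{i^\ast})$, and under any conditioning the chain $Y-Z_{i^\ast}-X_v$ is Markov. Writing $I(Y;X_v\mid X_A)=\mathbb{E}_{X_A}\!\big[\,\iota(p(Y,Z_{i^\ast}\mid X_A))\,\big]$, where $\iota(q)$ is the information one noisy copy of $Z_{i^\ast}$ carries about $Y$ under the posterior $q$ over $(Y,Z_{i^\ast})$, the posteriors form a martingale as the conditioning set grows from $A$ to $B$, so the inequality reduces by Jensen to a concavity statement for $\iota$ along this refinement.

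The main obstacle is precisely this last step: transferring a diminishing-returns statement about the latent $Z_{i^\ast}$ (for which the within-path votes are conditionally i.i.d., so the Naive-Bayes argument applies verbatim) up to the target $Y$ one layer above. Mutual information is not concave in the full joint posterior in general, so the transfer cannot be black-boxed, and I expect to need the specific APM structure---the fixed emission channel $p(x_{i^\ast j}\mid z_{i^\ast})$ together with the data-processing relation induced by $Y-Z_{i^\ast}-X_v$---to control the sign of the interaction term $I(Y;X_v\mid X_A)-I(Y;X_v\mid X_B)$ and close the same-path case. Once both cases are in hand, submodularity of $\text{IG}(Y;S)$ over the ground set of all candidate votes follows.
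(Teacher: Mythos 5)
Your cross-path case is sound and matches the standard argument: when the new vote $v$ and everything in the conditioning set lie on different access paths, conditional independence given $Y$ reduces the marginal gain to $H(X_v\mid X_A)-H(X_v\mid Y)$, and ``information never hurts'' finishes it. This is essentially the paper's Case~ii (the paper states it slightly more generally, conditioning only the \emph{added} element to be cross-path while $A$ may already contain same-path votes, and using $H(q\mid A,Y)=H(q\mid A_i,Y)$). But the same-path case, which you correctly identify as the crux, is a genuine gap: you propose to write $I(Y;X_v\mid X_A)$ as an expectation of a functional $\iota$ of the posterior over $(Y,Z_{i^\ast})$ and invoke Jensen along the posterior martingale, and then you concede that $\iota$ is not concave in the joint posterior in general and that you do not know how to control the interaction term. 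As written, the proof does not close, and the concavity route is not how the difficulty is resolved.

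The missing idea is a \emph{data processing inequality} argument on a collapsed representation, not a concavity argument. Because workers within a path share the emission channel $p(x_{ij}\mid z_i)$ and are exchangeable, the information gain of any set of $j$ votes from path $i$ depends only on $j$, so one may replace it by a single auxiliary node $Z_{ij}=\{X_{i1},\dots,X_{ij}\}$. The nested family $Z_{i1}\subset Z_{i2}\subset\cdots\subset Z_{iM}$ sits on a Markov chain $Y \to Z_i \to Z_{iM}\to\cdots\to Z_{i1}$ (each $Z_{ij}$ is a deterministic projection of $Z_{ij'}$ for $j<j'$), and the DPI gives $IG(Z_{ij'};Y)\ge IG(Z_{ij};Y)$ for $j'\ge j$; a short computation shows that the difference of marginal gains for adding the $(l)$-th same-path vote over $j$ versus $j'$ prior votes equals exactly $IG(Z_{ij'};Y)-IG(Z_{ij};Y)\ge 0$. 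For sets that mix paths, you reduce to one-element increments, condition on a realization $a_{i}^{c}$ of the other paths' votes (under which the network restricted to path $i$ is again a single-path chain), apply the single-path inequality pointwise, and take the expectation over $a_i^c$. So your diagnosis of where the difficulty lies is accurate, but the tool you reach for does not work, and the tool that does (nesting plus DPI, enabled by within-path exchangeability) is absent from your proposal.
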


\begin{proof}[Sketch of Theorem~\ref{theorem.submodularity}]
For proving Theorem~\ref{theorem.submodularity}, we consider a generic Bayesian Network with $N$ access
paths and $M$ possible worker votes on each access path. To prove
the submodularity of the objective function, we consider two sets (plans) $S \subset S'$ where $S' = S \cup \{v_j\}$ ,
\emph{i.e.}, $S'$ contains one additional vote from access path $j$ compared to $S$. Then, we consider
adding a vote $v_i$ from access path $i$ and we prove the diminishing return property of adding $v_i$ to $S'$ compared
to adding to $S$. The proof considers two cases. When $v_i$ and $v_j$ belong to different access paths, \emph{i.e.}, $i
\neq j$, the proof follows by using the property of conditional independence of votes from different access paths
given $Y$ and using the ``information never hurts" principle \cite{cover2012elements}. For the case of $v_i$ and $v_j$ belonging
to the same access path we reduce the network to an
equivalent network which contains only one access path $Z_i$ and then use the ``data processing inequality" principle
\cite{cover2012elements}.
\end{proof}

This theoretical result is of generic interest for other applications and a step
forward in proving the submodularity of information gain for more generic
Bayesian networks. 

\subsection{Optimization scheme}
\begin{algorithm2e}[t]
\caption{$\greedy$ Crowd Access Optimization}
\label{alg:greedy}
\footnotesize
{\bfseries Input:}  budget $B$ 

{\bfseries Output:} best plan $S_{best}$

{\bfseries Initialization:} $S_{best}=$\small{$\emptyset$}, $b = 0$

\While {$(\exists i \text{ s.t. } b \leq c_i)$}{
	$U_{best} = 0$
	
	\For{$i=1$ \KwTo $N$}{
		$S_{pure} = \text{PurePlan}(i)$
		
		\If{$c_i \leq B - b$}{
			$\Delta_{IG} = \ig(Y; S_{best} \cup S_{pure}) - \ig(Y, S_{best})$
			
			\If{$\frac{\Delta_{IG}}{c_i} > U_{best}$}{
   				$U_{best} = \frac{\Delta_{IG}}{c_i}$
   				
   				$S_{max} = S_{best} \cup S_{pure}$
   			}
		}
	}
	$S_{best} = S_{max}$
    
    $b = \text{cost}(S_{best})$
}
{\bfseries return} $S_{best}$
\end{algorithm2e}
After having determined the joint information gain as an appropriate quality measure for a plan, the crowd access
optimization problem is to compute:
\begin{equation}
\label{eq:partproblem}
S_{best} = \argmax_{S \in \mathcal{S}} \text{IG}(Y;S) \mbox{ s.t. } \sum_{i=1}^{N} c_i \cdot S[i] \leq B 
\end{equation}
where $\mathcal{S}$ is the set of all plans. 
An exhaustive search would consider $|\mathcal{S}| =\prod_{i=1}^{N} \frac {B}{c_i}$ plans out of which the ones that are
not feasible have to be eliminated. Nevertheless, efficient approximation schemes can be constructed given
that the problem is an instance of submodular function maximization under budget constraints
\cite{krause2005note,2004-operations_sviridenko_budgeted-submodular-max}.
Based on the submodular and non-decreasing properties of information gain we devise a greedy technique in
Algorithm~\ref{alg:greedy} that incrementally finds a local approximation for the best plan. In each step, the algorithm
evaluates the benefit-cost ratio $U$ between the marginal information gain and cost for all feasible access paths. The
marginal information gain is the improvement of information gain by adding to the current best plan one pure vote from
one access path. In the worst case, when all access paths have unit cost, the computational complexity of the algorithm
is $\mathcal{O}(GN^2MB)$, where $G$ is the number of generated samples for computing information gain.

\smallsection{Theoretical bounds of greedy optimization}
We now employ the submodularity of information gain in our
Bayesian network to prove theoretical bounds of the greedy optimization scheme. For the simple case of unit cost access paths, the
 greedy selection in Algorithm~\ref{alg:greedy} guarantees a utility of at least $(1 -
\sfrac{1}{e})$ $(=0.63)$ times the one obtained by optimal selection denoted by {\opt}
\cite{1978-_nemhauser_submodular-max}. However, the greedy selection scheme fails to provide approximation guarantees for
the general setting of varying costs \cite{khuller1999budgeted}.

Here, we exploit the following realistic property about the costs of the access
paths and allocated budget to prove strong theoretical guarantees about our Algorithm~\ref{alg:greedy}. We assume that the allocated budget is large enough
compared to the costs of the access paths. Formally stating, we assume that the cost of any access path $c_i$ is bounded
away from total budget $B$ by factor  $\gamma$ , \emph{i.e.},  $c_i \leq \gamma \cdot B \text{ } \forall i \in
\{1,\ldots, N\}$, where $\gamma \in (0,1)$. We state the theoretical guarantees of the  Algorithm~\ref{alg:greedy} in
Theorem~\ref{theorem.approximation} below\footref{note1}. 

\begin{theorem}\label{theorem.approximation} \label{THMAPRX}
The $\greedy$ optimization in Algorithm~\ref{alg:greedy} achieves a
utility of at least $\Big(1 -
\frac{1}{e^{(1-\gamma)}}\Big)$ times that obtained by the optimal plan \opt, where
$\gamma = \max_{i \in \{1,\ldots, N\}} \frac {c_i}{B}$.
\end{theorem}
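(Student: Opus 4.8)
The plan is to run the classical benefit-cost greedy analysis for budgeted submodular maximization, while exploiting the small-cost assumption $c_i \le \gamma B$ to control the single place where that analysis normally breaks down --- the infeasibility of expensive optimal elements. Write $F(S) = \ig(Y;S)$, which is nonnegative, monotone with $F(\emptyset)=0$, and submodular by Theorem~\ref{theorem.submodularity}. Let $\emptyset = S_0 \subset S_1 \subset \cdots \subset S_T = S_{best}$ be the plans produced by \greedy (each step adds one pure vote), let $b_t = \mathrm{cost}(S_t)$, let $S^*$ be an optimal feasible plan with $\opt = F(S^*)$, and set $\Delta_t = \opt - F(S_t)$. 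I would first record the termination fact: \greedy halts only when no access path fits the residual budget, so $B - b_T < \min_i c_i \le \max_i c_i = \gamma B$, giving $b_T > (1-\gamma)B$.

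Next I would establish a per-step progress inequality valid at every step $t{+}1$ at which all of $S^*$ is still feasible. By monotonicity and submodularity,
\[
\Delta_t \le F(S^* \cup S_t) - F(S_t) \le \sum_{j \in S^* \setminus S_t}\big(F(S_t \cup \{j\}) - F(S_t)\big).
\]
Because the workers on a given access path share parameters, $F$ is symmetric under permuting votes within a path, so $F(S_t \cup \{j\}) - F(S_t)$ depends only on the access path of $j$ and equals exactly the marginal that \greedy evaluates for that path; hence each summand is at most $\rho_{t+1}\,c_j$, where $\rho_{t+1}$ is the maximum benefit-cost ratio selected at step $t{+}1$. Summing and using $\sum_{j \in S^*\setminus S_t} c_j \le \mathrm{cost}(S^*) \le B$ gives $\Delta_t \le \rho_{t+1} B$. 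Since the chosen vote satisfies $F(S_{t+1})-F(S_t) = \rho_{t+1}\,c_{t+1}$, rearranging yields the recurrence $\Delta_{t+1} \le \Delta_t\big(1 - \tfrac{c_{t+1}}{B}\big)$.

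The main obstacle --- exactly why plain benefit-cost greedy has no constant-factor guarantee in general --- is that this derivation requires every $j \in S^*$ to be feasible, which can fail once the residual budget drops below some $c_j$. This is where $\gamma$ does the work: as long as $b_t \le (1-\gamma)B$ the residual budget is at least $\gamma B \ge c_i$ for all $i$, so every element is feasible and the recurrence holds. Let $T'$ be the first step with $b_{T'} > (1-\gamma)B$, which exists by the termination fact; every step up to $T'$ lies in this feasible regime. Multiplying the recurrence over these steps and using $1-x \le e^{-x}$,
\[
\Delta_{T'} \le \opt \prod_{t=1}^{T'}\Big(1-\tfrac{c_t}{B}\Big) \le \opt\, e^{-\sum_{t=1}^{T'} c_t / B} = \opt\, e^{-b_{T'}/B} < \opt\, e^{-(1-\gamma)}.
\]
Finally, monotonicity gives $F(S_{best}) \ge F(S_{T'})$, so $\Delta_T \le \Delta_{T'}$ and therefore $F(S_{best}) \ge \big(1 - e^{-(1-\gamma)}\big)\opt$, which is the claimed bound.
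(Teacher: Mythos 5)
Your proof is correct and follows essentially the same route as the paper's: the classical Khuller--Moss--Naor benefit-cost greedy analysis for budgeted submodular maximization, with the bounded-cost assumption $c_i \le \gamma B$ exploited in the same place, namely to lower-bound the budget spent when the algorithm can no longer add an element by $(1-\gamma)B$. The remaining differences are presentational --- you derive the per-step inequality $\Delta_{t+1}\le\Delta_t\left(1-\tfrac{c_{t+1}}{B}\right)$ explicitly and are more careful about why all of $S^*$ remains feasible up to step $T'$, then close with $1-x\le e^{-x}$, whereas the paper cites the per-step inequality and passes through the product-minimization and $\left(1-\tfrac{1}{\eta}\right)^{\eta}$ limit argument.
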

For instance, Algorithm~\ref{alg:greedy} achieves an approximation ratio of at least $0.39$ for $\gamma = 0.5$, and $0.59$ for
$\gamma = 0.10$.

\begin{proof}[Sketch of Theorem~\ref{theorem.approximation}]
We follow the structure of the proof from
\cite{khuller1999budgeted,2004-operations_sviridenko_budgeted-submodular-max}.
The key idea is to use the fact that the budget spent by the algorithm at the end of
execution when it can not add an element to the solution is at least $(B -
\operatorname*{max}_{i \subseteq [1, \ldots, N]} c_i)$, which is lower-bounded by $B(1 - \gamma)$. This lower bound on the spent budget, along with the fact that the
elements are picked greedily at every iteration leads to the desired bounds.
\end{proof}

These results are of practical importance in many other applications as the assumption of non-unit but bounded costs with respect to
budget often holds in realistic settings.
\section{Experimental Evaluation}
\label{sec.experiments}
We evaluated our work on three real-world datasets. The main goal of the
experiments is to validate the proposed model and the optimization technique. We
compare our approach with other state of the art alternatives and results show
that leveraging diversity through the Access Path Model combined with the greedy
crowd access optimization technique can indeed improve the quality of
predictions.

\smallsection{Metrics}
The comparison
is based on two main metrics: \emph{accuracy} and \emph{negative
log-likelihood}.
Accuracy corresponds to the percentage of correct predictions. Negative log-likelihood is computed as the sum over all
test samples of the
negative log-likelihood
that the prediction is accurate. Hence, it measures not only the correctness of a model but also its ability to output
meaningful confidence. 
\begin{equation}
\text{-logLikelihood} = -\sum_{s_t} \log p(\text{prediction}=y_t|x_t)
\end{equation}
The closer a prediction is to the real outcome the lower
is its negative log-likelihood. Thus, a desirable model should offer
\emph{low} values of negative log-likelihood.
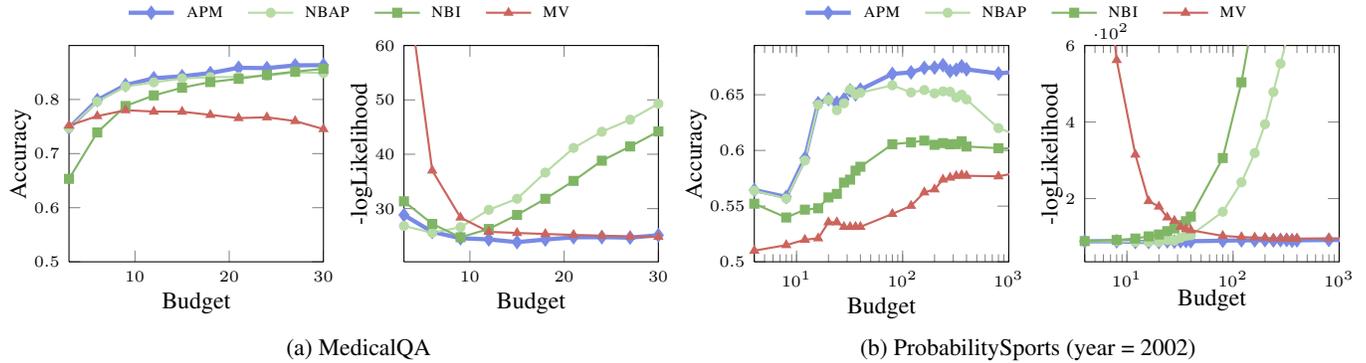
\begin{figure*}[t]
\begin{tikzpicture}[font=\small]
\definecolor{apm}{RGB}{120, 138, 229}
\definecolor{nbap}{RGB}{183, 219, 165}
\definecolor{nbi}{RGB}{128, 181, 101}
\definecolor{mv}{RGB}{203, 100, 93}
\definecolor{odds}{RGB}{242, 184, 85}
\pgfplotstableread{graphdata/medical_accuracy.txt}{\accuracy}
\begin{axis}[
	legend columns=2, 
    tick label style={font=\tiny},
    legend style={nodes=right,font=\tiny,anchor=south,at={(0.56,1.0)},
    draw=none, fill=none}, 
    ylabel shift = -0.2cm,
    xlabel shift = -0.1cm,
    ylabel={Accuracy},
    xlabel={Budget},
    ymin=0.5,
    xmin=3,
    xmax=30,
    bar width=.15cm,
    height=.53\columnwidth,
    width=.59\columnwidth,
    name=plot1
    ]   
    \addplot [mark size =1.5,mark=diamond*, ultra thick,draw=apm, apm, mark options=solid] table [x={tb}, y={apm_acc}]
    {\accuracy}; 
    \label{plot:apm}
    \addplot  [mark size =1.5,mark=*, thick,draw=nbap, nbap, mark options=solid] table [x={tb},y={nbap_acc}] {\accuracy};
    \label{plot:nbap}
    \addplot  [mark size =1.5,mark=square*, thick,draw=nbi, nbi,  mark options=solid] table [x={tb},y={nbi_acc}] {\accuracy};
    \label{plot:nbi}
    \addplot  [mark size =1.5,mark=triangle*, thick,draw=mv, mv, mark options=solid] table [x={tb},y={mv_acc}] {\accuracy};
    \label{plot:mv}
\end{axis}
\begin{axis}[
	legend columns=2, 
    tick label style={font=\tiny},
    ylabel={-logLikelihood},
    ylabel shift = -0.2cm,
    xlabel shift = -0.1cm,
    xlabel={Budget},
    ymax=60,
    xmin=3,
    xmax=30,
    bar width=.15cm,
    height=.53\columnwidth,
    width=.59\columnwidth,
    name=plot2,
    at=(plot1.right of south east), anchor=left of south west
    ]   
    \addplot [mark size =1.5,mark=diamond*, ultra thick,draw=apm, apm, mark options=solid] table [x={tb}, y={apm_logl}]
    {\accuracy}; \addplot  [mark size =1.5,mark=*, thick,draw=nbap, nbap, mark options=solid] table [x={tb},y={nbap_logl}] {\accuracy};
    \addplot  [mark size =1.5,mark=square*, thick,draw=nbi, nbi,  mark options=solid] table [x={tb},y={nbi_logl}] {\accuracy};
    \addplot  [mark size =1.5,mark=triangle*, thick,draw=mv, mv, mark options=solid] table [x={tb},y={mv_logl}] {\accuracy};
\end{axis}
\pgfplotstableread{graphdata/probsports_accuracy.txt}{\accuracy}
\begin{axis}[
	legend columns=2, 
    tick label style={font=\tiny},
    ylabel shift = -0.2cm,
    xlabel shift = -0.1cm,
    ylabel={Accuracy},
    xlabel={Budget},
    ymin=0.5,
    xmin=4,
    xmode=log,
    xmax=1000,
    bar width=.15cm,
    height=.53\columnwidth,
    width=.59\columnwidth,
    name=plot3,
    at=(plot2.right of south east), anchor=left of south west, 
    ]   
    \addplot [mark size =1.5,mark=diamond*, ultra thick,draw=apm, apm, mark options=solid] table [x={tb}, y={apm_acc}]
    {\accuracy}; \addplot  [mark size =1.5,mark=*, thick,draw=nbap, nbap, mark options=solid] table [x={tb},y={nbap_acc}] {\accuracy};
    \addplot  [mark size =1.5,mark=square*, thick,draw=nbi, nbi,  mark options=solid] table [x={tb},y={nbi_acc}] {\accuracy};
    \addplot  [mark size =1.5,mark=triangle*, thick,draw=mv, mv, mark options=solid] table [x={tb},y={mv_acc}] {\accuracy};
\end{axis}

\begin{axis}[
    tick label style={font=\tiny},
    ylabel={-logLikelihood},
    ylabel shift = -0.2cm,
    xlabel shift = -0.2cm,
    xlabel={Budget},
    scaled ticks=base 10:-2,
	legend columns=2, 
    ymax=600,
    xmin=4,
    xmode=log,
    xmax=1000,
    bar width=.15cm,
    height=.53\columnwidth,
    width=.59\columnwidth,
    name=plot4,
    at=(plot3.right of south east), anchor=left of south west,   
    ]   
    \addplot [mark size =1.5,mark=diamond*, ultra thick,draw=apm, apm, mark options=solid] table [x={tb}, y={apm_logl}]
    {\accuracy}; \addplot  [mark size =1.5,mark=*, thick,draw=nbap, nbap, mark options=solid] table [x={tb},y={nbap_logl}] {\accuracy};
    \addplot  [mark size =1.5,mark=square*, thick,draw=nbi, nbi,  mark options=solid] table [x={tb},y={nbi_logl}] {\accuracy};
    \addplot  [mark size =1.5,mark=triangle*, thick,draw=mv, mv, mark options=solid] table [x={tb},y={mv_logl}] {\accuracy};
\end{axis}
\matrix[
	matrix of nodes,
    anchor=south,
    draw = none,
    inner sep=0.2em,
    font=\tiny \sc,
    column 1/.style={anchor=base west},
    column 2/.style={anchor=base west},
    column 3/.style={anchor=base west},
    column 4/.style={anchor=base west},
  ]at(3.8cm, 3.1cm)
  {
    \ref{plot:apm} 	& APM 	&[5pt]
    \ref{plot:nbap}	& NBAP 	&[5pt]
    \ref{plot:nbi}	& NBI 	&[5pt]
    \ref{plot:mv}	& MV 	&[5pt]\\};
\matrix[
	matrix of nodes,
    anchor=south,
    draw = none,
    inner sep=0.2em,
    font=\tiny \sc,
    column 1/.style={anchor=base west},
    column 2/.style={anchor=base west},
    column 3/.style={anchor=base west},
    column 4/.style={anchor=base west},
  ]at(12.8cm, 3.1cm)
  {
    \ref{plot:apm} 	& APM 	&[5pt]
    \ref{plot:nbap}	& NBAP 	&[5pt]
    \ref{plot:nbi}	& NBI 	&[5pt]
    \ref{plot:mv}	& MV 	&[5pt]\\};
\end{tikzpicture}
\begin{tikzpicture}[font=\small]
\node at (5cm, -0.2cm) [minimum width = 95 mm, minimum height = 2 mm] {(a) MedicalQA};
\node at (13.9cm, -0.2cm)[minimum width = 95 mm, minimum height = 2 mm] {(b) ProbabilitySports (year = 2002)};
\end{tikzpicture}
\vspace{-5mm}
\caption{Accuracy and negative log-likelihood for equally distributed budget across all access paths. The negative
log-likelihood of Na\"{i}ve Bayes models deteriorates for high budget while for the APM it stays stable. NBI is
not competitive due to data sparsity.}
\label{fig:constrained_budget}
\end{figure*}
\subsection{Dataset description} All the following datasets come from real crowdsourcing tasks.
For experiments with restricted budget, we repeat the learning and prediction process via random
vote selection and k-fold cross-validation.

\smallsection{CUB-200} The dataset \cite{welinder2010caltech} was built as a large-scale data collection for attribute-based classification of bird images on Amazon Mechanical Turk (AMT). Since this is a difficult task even for experts,
the crowd workers are not directly asked to determine the bird category but whether a certain attribute is
present in the image. Each attribute (\emph{e.g.}, yellow beak) brings a piece of information
for the problem and we treat them as access paths.
The dataset contains 5-10 answers for each of the 288 available attributes. We keep the cost of all access paths equal
as there was no clear evidence of attributes that are more difficult to distinguish than others. The total number of
answers is approximately $7.5 \times 10^6$.

\smallsection{MedicalQA} We gathered 100 medical questions and
forwarded them to AMT. Workers were asked to answer the questions after reading in specific
health forums categorized as in Table \ref{tbmedicalqa} which we then design as access paths.
255 people participated in our experiment. The origin of the answer was
checked via an explanation url provided along with the answer as a sanity check. The tasks were paid equally to prevent the price
of the task to affect the quality of the answers. For experimental purposes, we assign an integer cost of
(3, 2, 1) based on the reasoning that in real life doctors are more expensive to ask, followed by patients and
common people.
\begin{table}[h!]
	\footnotesize
	\centering
	\begin{tabular}{p{0.3cm}ll}
		\midrule
		&\bf{Description}			& \bf{Forums} 						\\
		\midrule
		(1)	& Answers from doctors		& \small{www.webmd.com}				\\
			&							& \small{www.medhelp.org} 			\\\hline 
		(2) & Answers from patients		& \small{www.patient.co.uk} 			\\ 
			&							& \small{www.ehealthforum.com}		\\ \hline
		(3) & General Q\&A forum			& \small{www.quora.com}				\\
			&							& \small{www.wiki.answers.com}		\\\hline
	\end{tabular}
	\caption{Access Path Design for MedicalQA dataset.}
	\label{tbmedicalqa}
\end{table}

\smallsection{ProbabilitySports} This data is based on a crowdsourced
betting competition (www.probabilitysports.com) on NFL games.
The participants voted on the question: ``\emph{Is the home team
going to win?}'' for 250 events within a season.
There are 5,930 players in the entire dataset contributing with 1,413,534 bets. We designed the access paths
based on the accuracy of each player in the training set which does not reveal information about the testing set. Since the players' accuracy in the dataset follows a normal
distribution, we divide this distribution into three intervals where each interval corresponds to one access path (worse
than average, average, better than average). As access paths have a decreasing error
rate, we assign them an increasing cost $(2,3,4)$. 

\subsection{Model evaluation}
For evaluating the Access Path Model independently of the optimization, we first show experiments where the budget is
equally distributed across access paths. The question we want to answer here is: \emph{``How robust are the APM predictions in terms of accuracy
and negative log-likelihood?''}

\smallsection{Experiment 1: Constrained budget} 
Figure~\ref{fig:constrained_budget} illustrates the effect of data sparsity on
quality. We varied the budget and equally distributed it
across all access paths. 
We do not show results from CUB-200 as the maximum number of votes per access path in this dataset is 5-10.

\smallsection{MedicalQA} The participation of workers in this experiment was stable, which
allows for a better error estimation.
Thus, as shown in Figure~\ref{fig:constrained_budget}(a), for
high redundancy NBI reaches comparable accuracy with the APM although
the negative log-likelihood dramatically increases. For lower budget
and high sparsity NBI cannot provide accurate results.

\smallsection{ProbabilitySports} Figure~\ref{fig:constrained_budget}(b) shows
that while the improvement of the APM accuracy over NBI and MV is stable, NBAP
starts facing the overconfidence problem while budget increases. NBI exhibits low accuracy due to
very high sparsity even for sufficient budget. Majority Vote fails to produce accurate
predictions as it is agnostic to error rates. 
\subsection{Optimization scheme evaluation}
In these experiments, we evaluate the efficiency of the greedy approximation scheme to choose high-quality
plans. For a fair comparison, we adapted the same scheme to NBI and NBAP. We will use the
following accronyms for the crowd access strategies:
OPT (optimal selection), GREEDY (greedy approximation), RND (random selection), BEST (votes from the most accurate
access path), and EQUAL (equal distribution of votes across access paths).

\smallsection {Experiment 2: Greedy approximation and diversity}
The goal of this experiment is to answer the questions:
\emph{``How close is the greedy approximation to the theoretical
optimal solution?"} and \emph{``How does
information gain exploit diversity?''}. Figure~\ref{fig:dp} shows the development of information gain for
the optimal plan, the greedily approximated plan, \rw{the equal distribution
plan}, and three pure plans that take votes only from one access path. The
quality of GREEDY is very close to the optimal plan. The third access path in
ProbabilitySports (containing better than average users)
reaches the highest information gain compared to the others.
Nevertheless, its quality is saturated for higher budget which encourages
the optimization scheme to select other access paths as well. Also, we notice that the EQUAL plan does
not reach optimal values of information gain although it maximizes diversity. 
Next, we show that the quality of predictions can be further improved
if diversity is instead planned by using information gain as an objective.
\begin{figure}[h]
\centering
\begin{tikzpicture}[font=\small]
\definecolor{dp}{RGB}{203, 100, 93}
\definecolor{opt}{RGB}{183, 219, 165}
\definecolor{ap}{RGB}{110, 105, 105}
\pgfplotstableread{graphdata/probsports_optimization.txt}{\igoptimization}
\begin{axis}[
	xlabel shift = -0.2cm,
    ylabel shift = -0.2cm,
    tick label style={font=\tiny},
    legend columns=2, 
    legend style={nodes=right,font=\tiny \sc,anchor=south,at={(0.5,1.05)},
    draw=none,fill=none}, ylabel={IG},
    xlabel={Budget},
    xmin=1,
    bar width=.10cm,
    height=.53\columnwidth,
    width=.55\columnwidth,
    mark repeat={2},
    legend entries={OPT,GREEDY,AP1,AP2,AP3 (BEST),EQUAL}      
    ]   
    \addplot [mark size =1.2,mark=diamond*, very thick,draw=opt, opt, mark options=solid] table [x={b}, y={opt}] {\igoptimization};
    \addplot  [mark size =1.2,mark=*, very thick,draw=dp, dp, mark options=solid] table [x={b},y={dp}] {\igoptimization};
    \addplot  [mark size =1.2,mark=o, draw=ap, ap, very thick, mark options=solid] table [x={b},y={ap1}] {\igoptimization};
    \addplot  [mark size =1.2,mark=square*, draw=ap, ap, very thick, mark
    options=solid] table [x={b},y={ap2}] {\igoptimization}; 
    \addplot  [mark size =1.5,mark=triangle*, draw=ap, ap, very thick, mark options=solid] table [x={b},y={ap3}] {\igoptimization};
        \addplot  [mark size =1.5,mark=x, densely dotted, draw=red, red, very
        thick, mark options=solid] table [x={b},y={equal}] {\igoptimization};
   \end{axis} 
\end{tikzpicture}
\hspace{-2pt}
\begin{tikzpicture}[font=\small]
\definecolor{dp}{RGB}{203, 100, 93}
\definecolor{opt}{RGB}{183, 219, 165}
\definecolor{ap}{RGB}{110, 105, 105}
\pgfplotstableread{graphdata/probsports_optimization_budgetdistribution.txt}{\budget}
	\begin{axis}[ybar stacked,
	xlabel shift = -0.2cm,
    ylabel shift = -0.2cm,
	ymin=0,
	ymax = 17,
	ylabel={No. accesses},
    xlabel={Budget},
    tick label style={font=\tiny},
    tickpos=left,
    xtick=data,
    legend style={nodes=right,font=\tiny \sc,anchor=south,at={(0.4,1.03)},
    draw=none,fill=none}, 
    height=.53\columnwidth,
    width=.55\columnwidth,
    bar width=.17cm,
    legend entries={AP1(cost=2),AP2(cost=3),AP3(cost=4)}, legend image code/.code={%
      \draw[#1] (0cm,-0.1cm) rectangle (0.6cm,0.1cm);
    }]
    \addplot  [draw=black, fill=ap!20, thick, mark options=solid] table [x={b},y={ap1}] {\budget};
    \addplot  [draw=black, fill=ap!50, thick, mark options=solid] table [x={b},y={ap2}] {\budget};
   	\addplot  [draw=black, fill=ap, thick, mark options=solid] table [x={b},y={ap3}] {\budget};
   	\node at (axis cs:42.5,17) [anchor=south east,font=\tiny,
   	rotate=90]{[2,4,6]}; 
   	\node at (axis cs:37.5,15) [anchor=south east,font=\tiny,
   	rotate=90] {[1,3,6]}; 
   	\node at (axis cs:32.5,13) [anchor=south
   	east,font=\tiny, rotate=90] {[0,3,5]}; 
   	\node at (axis cs:27.5,12)
   	[anchor=south east,font=\tiny, rotate=90] {[1,1,5]}; 
   	\node at (axis
   	cs:22.5,10) [anchor=south east,font=\tiny, rotate=90] {[0,1,4]}; 
   	\node at
   	(axis cs:17.5,8) [anchor=south east,font=\tiny, rotate=90] {[0,0,3]}; 
   	\node
   	at (axis cs:12.5,8) [anchor=south east,font=\tiny, rotate=90] {[1,0,2]};
   	\node at (axis cs:7.5,6) [anchor=south east,font=\tiny, rotate=90] {[0,0,1]};    	
   \end{axis}
\end{tikzpicture}
\vspace{-1mm}
\caption{Information gain and budget distribution for ProbabilitySports (year=2002). As budget
increases, GREEDY access plans exploit more than one access path.}
\label{fig:dp}
\end{figure}

\smallsection{Experiment 3: Crowd access optimization}
\label{sec:crowd_access_opt}
This experiment combines together both the model and the optimization
technique. The main question we want to answer here is: \emph{``What is the practical
benefit of greedy optimization on the APM w.r.t. accuracy and negative
log-likelihood?"}

\smallsection{CUB-200} For this dataset (Figure~\ref{fig:optimizationall}(a)) where the access path design is based on
attributes, the discrepancy between NBAP and the APM is high and EQUAL plans exhibit low quality as not all
attributes are informative for all tasks.

\smallsection{ProbabilitySports} Access Path based models (APM and NBAP) outperform MV and
NBI. NBI plans target concrete users in the competition. Hence, their accuracy for budget values less than 10 is low as
not all targeted users voted for all events. Since access paths are designed based on the
accuracy of workers, EQUAL plans do not offer a clear improvement
while NBAP is advantaged in terms of accuracy by its preference to select the
most accurate access paths. 
\begin{figure*}[t]
\centering
\begin{tikzpicture}[font=\small]
\definecolor{apm}{RGB}{120, 138, 229}
\definecolor{nbap}{RGB}{183, 219, 165}
\definecolor{nbi}{RGB}{128, 181, 101}
\definecolor{mv}{RGB}{203, 100, 93}
\definecolor{odds}{RGB}{242, 184, 85}
\pgfplotstableread{graphdata/birds_optimization_all.txt}{\optimizationall}
\begin{axis}[
	xlabel shift = -0.2cm,
    ylabel shift = -0.2cm,
    tick label style={font=\tiny},
    ylabel={Accuracy},
    xlabel={Budget},
    ymin=0.5,
    xmin=0,
    xmax=30,
    ytick={0.5, 0.55,0.6, 0.65, 0.7, 0.75, 0.8, 0.85, 0.9},
    xtick={0, 10, 20, 30, 40, 50},
    bar width=.15cm,
    height=.53\columnwidth,
    width=.59\columnwidth,
    name=plot1
    ]   
    \addplot [mark size =1.0, mark=diamond*, ultra thick,draw=apm, apm, mark options=solid] table [x={b}, y={apm_a_acc}]
    {\optimizationall}; 
    \label{plot:apm_gr}
    \addplot  [mark size =1.0, mark=*, thick,draw=nbap, nbap, mark options=solid] table [x={b},y={nbap_a_acc}] {\optimizationall};	
    \label{plot:nbap_gr}
    \addplot  [mark size =1.2, mark=x, thick,densely dotted,draw=red, red, mark
    options=solid] table [x={b},y={apm_eq_acc}] {\optimizationall};
    \label{plot:apm_eq}
\end{axis}

\begin{axis}[
	xlabel shift = -0.2cm,
    ylabel shift = -0.2cm,
    tick label style={font=\tiny},
	ylabel={-logLikelihood},
    xlabel={Budget},
    ymin=0,
    xmin=0,
    xmax=30,
    xtick={0, 10, 20, 30, 40, 50},
    bar width=.15cm,
    height=.53\columnwidth,
    width=.59\columnwidth,
    name=plot2,
    at=(plot1.right of south east), anchor=left of south west
    ]   
    \addplot [mark size =1.0, mark=diamond*, ultra thick,draw=apm, apm, mark options=solid] table [x={b},
    y={apm_a_logl}] {\optimizationall}; 
    \addplot  [mark size =1.0, mark=*, thick,draw=nbap, nbap, mark options=solid] table [x={b},y={nbap_a_logl}] {\optimizationall};
    \addplot  [mark size =1.2, mark=x, thick,densely dotted,draw=red, red, mark
    options=solid] table [x={b},y={apm_eq_logl}] {\optimizationall};
\end{axis}

\pgfplotstableread{graphdata/probsports_optimization_all.txt}{\optimizationall}
\begin{axis}[
 	xlabel shift = -0.2cm,
    ylabel shift = -0.2cm,
    tick label style={font=\tiny},
    ylabel={Accuracy},
    xlabel={Budget},
    ymin=0.5,
    xmin=0,
    xmax=30,
    xtick={0, 10, 20, 30, 40, 50},
    bar width=.15cm,
    height=.53\columnwidth,
    width=0.59\columnwidth,
    name=plot3,
    at=(plot2.right of south east), anchor=left of south west
    ]   
    \addplot [mark size =1.0, mark=diamond*, ultra thick,draw=apm, apm, mark
    options=solid] table [x={b}, y={apm_a_acc}] {\optimizationall}; 
    \addplot  [mark size =1.0, mark=*,  thick,draw=nbap, nbap, mark options=solid] table [x={b},y={nbap_a_acc}] {\optimizationall};
    \addplot  [ thick,draw=nbi, nbi, dashed, mark options=solid] table [x={b},y={nbi_a_acc}] {\optimizationall};
    \label{plot:nbi_gr}
    \addplot  [mark size =1.0, mark=square*,  ,draw=nbi, nbi,  mark options=solid] table [x={b},y={nbi_r_acc}] {\optimizationall};
    \label{plot:nbi_rnd}
    \addplot  [mark size =1.0, mark=triangle*, very thick,draw=mv, mv, mark options=solid] table [x={b},y={mv_r_acc}] {\optimizationall};
        \addplot  [mark size =1.2, mark=x, very thick,densely dotted,draw=red,
        red,mark options=solid] table [x={b},y={apm_eq_acc}] {\optimizationall};
    
    \label{plot:mv_rnd}
\end{axis}

\pgfplotstableread{graphdata/probsports_optimization_all.txt}{\optimizationall}
\begin{axis}[
	xlabel shift = -0.2cm,
    ylabel shift = -0.2cm,
    tick label style={font=\tiny},
    ylabel={-logLikelihood},
    xlabel={Budget},
    scaled y ticks=base 10:-2,
    ymin=0,
    ymax=300, 
    xmin=0,
    xmax=30,
    xtick={0, 10, 20, 30, 40, 50},
    bar width=.15cm,
    height=.53\columnwidth,
    width=.59\columnwidth,
    name=plot4,
    at=(plot3.right of south east), anchor=left of south west
    ]   
    \addplot [mark size =1.0, mark=diamond*, ultra thick,draw=apm, apm, mark options=solid] table [x={b},
    y={apm_a_logl}] {\optimizationall}; 
    \addplot  [mark size =1.0, mark=*, thick,draw=nbap, nbap, mark options=solid] table [x={b},y={nbap_a_logl}] {\optimizationall};
    \addplot  [thick,draw=nbi, nbi, dashed, mark options=solid] table [x={b},y={nbi_a_logl}] {\optimizationall};
    \addplot  [mark size =0.8, mark=square*, thick,draw=nbi, nbi,  mark options=solid] table [x={b},y={nbi_r_logl}] {\optimizationall};
    \addplot  [mark size =1.0, mark=triangle*, thick,draw=mv, mv, mark options=solid] table [x={b},y={mv_r_logl}] {\optimizationall};
        \addplot  [mark size =1.2, mark=x, thick,densely dotted,draw=red, red,
        mark options=solid] table [x={b},y={apm_eq_logl}] {\optimizationall};
\end{axis}
\matrix[
	matrix of nodes,
    anchor=south,
    draw = none,
    inner sep=0.2em,
    font=\tiny \sc,
    column 1/.style={anchor=base west},
    column 2/.style={anchor=base west},
    column 3/.style={anchor=base west},
    column 4/.style={anchor=base west},
  ]at(4.0cm, 3.0cm)
  {
    \ref{plot:apm_gr} 	& APM + GREEDY 		&[5pt]
    \ref{plot:nbap_gr}	& NBAP + GREEDY 	&[5pt]
    \ref{plot:apm_eq}	& APM + EQUAL 		&[5pt]\\};
    
\matrix[
	matrix of nodes,
    anchor=south,
    draw = none,
    inner sep=0.2em,
    font=\tiny \sc,
    column 1/.style={anchor=base west},
    column 2/.style={anchor=base west},
    column 3/.style={anchor=base west},
    column 4/.style={anchor=base west},
    column 5/.style={anchor=base west},
    column 6/.style={anchor=base west}   
  ]at(12.9cm, 3.0cm)
  {
    \ref{plot:apm_gr} 	& APM + GREEDY 		&[5pt]
    \ref{plot:nbap_gr}	& NBAP + GREEDY 	&[5pt]
    \ref{plot:apm_eq}	& APM + EQUAL		&[5pt]\\
    \ref{plot:nbi_gr} 	& NBI + GREEDY 		&[5pt]
    \ref{plot:nbi_rnd}	& NBI + RND 		&[5pt]
    \ref{plot:mv_rnd}	& MV 	+ RND 		&[5pt]\\};
\node at (4cm, -0.8cm) [minimum width = 95 mm] {(a) CUB-200 (all species)};
\node at (12.9cm, -0.8cm)[minimum width = 95 mm] {(b) ProbabilitySports (year =
2002)};
\end{tikzpicture}
\vspace{-5mm}
\caption{Crowd access optimization results for varying budget. Data sparsity and non-guaranteed votes are better handled
by the APM model also for optimization purposes, leading to improved accuracy and confidence.}
\label{fig:optimizationall}
\end{figure*}
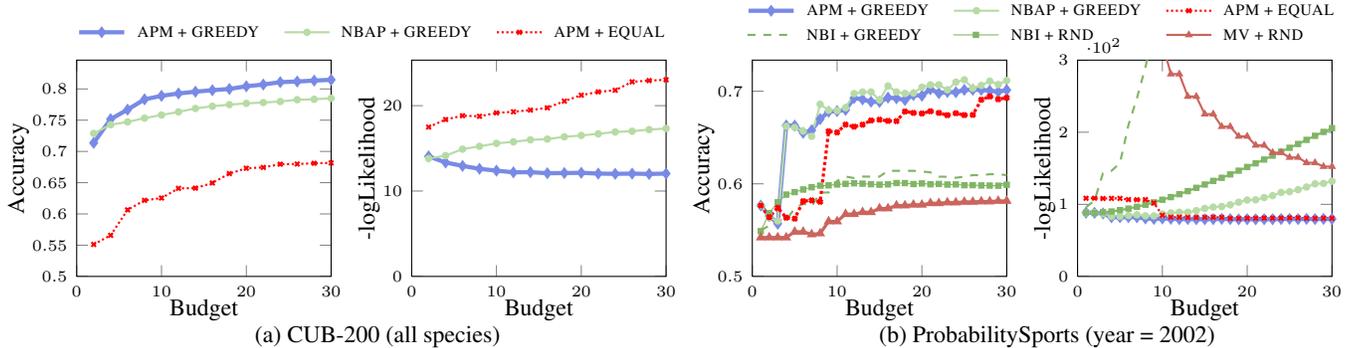
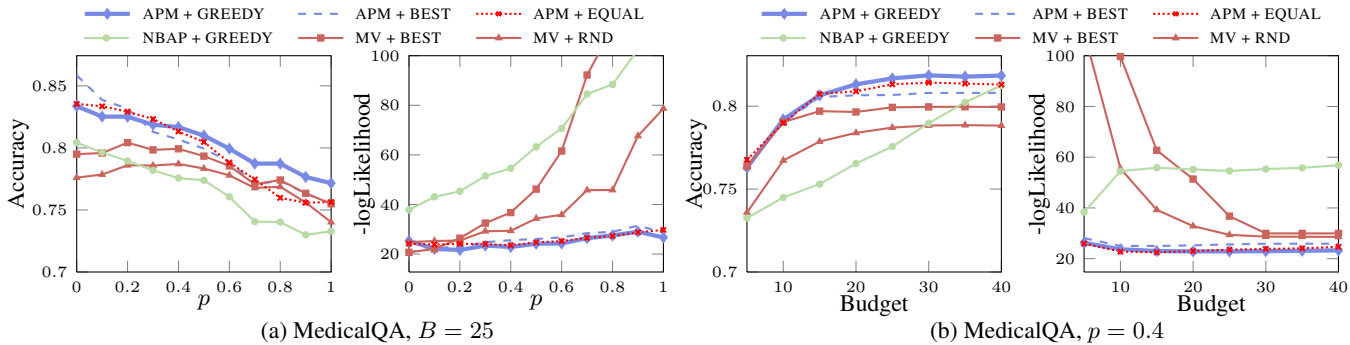
\begin{figure*}[t]
\centering
\begin{tikzpicture}[font=\small]
\definecolor{apm}{RGB}{120, 138, 229}
\definecolor{nbap}{RGB}{183, 219, 165}
\definecolor{nbi}{RGB}{128, 181, 101}
\definecolor{mv}{RGB}{203, 100, 93}
\definecolor{odds}{RGB}{242, 184, 85}
\pgfplotstableread{graphdata/medical_bias.txt}{\bias}
\begin{axis}[
 	xlabel shift = -0.2cm,
    ylabel shift = -0.2cm,
    tick label style={font=\tiny},
    legend columns=2, 
    ylabel={Accuracy},
    xlabel={$p$},
    ymin=0.7,
    xmin=0,
    xmax=1.0,
    height=.53\columnwidth,
    width=0.59\columnwidth,
    name=plot1
    ]   
    \addplot [mark size =1.0, mark=diamond*, ultra thick,draw=apm, apm, mark options=solid] table [x={p}, y={gr_apm_a}]
    {\bias}; 
    \label{plot:apm_gr}
    \addplot  [thick,draw=apm, apm, dashed, mark options=solid] table [x={p},y={best_apm_a}]
    {\bias}; 
    \label{plot:apm_best}
    \addplot  [mark size =1.0, mark=triangle*, thick,draw=mv, mv, mark options=solid] table [x={p},y={rnd_mv_a}]
    {\bias};
    \label{plot:mv_rnd}
     \addplot  [mark size =1.0, mark=square*, thick,draw=mv, mv,  mark options=solid] table [x={p},y={best_mv_a}]
    {\bias};
    \label{plot:mv_best}
     \addplot  [mark size =1.0, mark=*, thick,draw=nbap, nbap, mark options=solid] table [x={p},y={gr_nbap_a}]
    {\bias}; 
    \label{plot:nbap_gr}
     \addplot  [mark size =1.4, mark=x, thick,draw=red, red, densely
     dotted, mark options=solid] table [x={p},y={eq_apm_a}] {\bias}; 
    \label{plot:eq_apm}
\end{axis}

\begin{axis}[
 	xlabel shift = -0.2cm,
    ylabel shift = -0.3cm,
    tick label style={font=\tiny},
    legend columns=2, 
    ylabel={-logLikelihood},
    xlabel={$p$},
    ymax=100,
    xmin=0, 
    xmax=1.0,
    height=.53\columnwidth,
    width=0.59\columnwidth,
    name=plot2,
    at=(plot1.right of south east), anchor=left of south west
    ]   
    \addplot [mark size =1.0, mark=diamond*, ultra thick,draw=apm, apm, mark options=solid] table [x={p}, y={gr_apm_l}]
    {\bias}; 
    \addplot  [thick,draw=apm, apm, dashed, mark options=solid] table [x={p},y={best_apm_l}]
    {\bias}; 
    \addplot  [mark size =1.0, mark=triangle*, thick,draw=mv, mv, mark options=solid] table [x={p},y={rnd_mv_l}]
    {\bias};
     \addplot  [mark size =1.0, mark=square*, thick,draw=mv, mv,  mark options=solid] table [x={p},y={best_mv_l}]
    {\bias};
     \addplot  [mark size =1.0, mark=*, thick,draw=nbap, nbap, mark options=solid] table [x={p},y={gr_nbap_l}]
    {\bias}; 
     \addplot  [mark size =1.4, mark=x, thick,densely dotted,draw=red, red,
     mark options=solid] table [x={p},y={eq_apm_l}] {\bias}; 
\end{axis}

\pgfplotstableread{graphdata/medical_bias_fixed.txt}{\bias}
\begin{axis}[
 	xlabel shift = -0.2cm,
    ylabel shift = -0.3cm,
    tick label style={font=\tiny},
    legend columns=2, 
    ylabel={Accuracy},
    xlabel={Budget},
    ymin=0.7,
    xmin=0,
    xmin=5,
    xmax=40,
    height=.53\columnwidth,
    width=0.59\columnwidth,
    name=plot3,
    at=(plot2.right of south east), anchor=left of south west
    ]   
    \addplot [mark size =1.0, mark=diamond*, ultra thick,draw=apm, apm, mark options=solid] table [x={b}, y={gr_apm_a}]
    {\bias}; 
    \addplot  [thick,draw=apm, apm, dashed, mark options=solid] table [x={b},y={best_apm_a}]
    {\bias}; 
    \addplot  [mark size =1.0, mark=triangle*, thick,draw=mv, mv, mark options=solid] table [x={b},y={rnd_mv_a}]
    {\bias};
     \addplot  [mark size =1.0, mark=square*, thick,draw=mv, mv,  mark options=solid] table [x={b},y={best_mv_a}]
    {\bias};
     \addplot  [mark size =1.0, mark=*, thick,draw=nbap, nbap, mark options=solid] table [x={b},y={gr_nbap_a}]
    {\bias}; 
     \addplot  [mark size =1.4, mark=x, thick,densely dotted,draw=red, red,
     mark options=solid] table [x={b},y={eq_apm_a}] {\bias}; 
\end{axis}

\begin{axis}[
 	xlabel shift = -0.2cm,
    ylabel shift = -0.3cm,
    tick label style={font=\tiny},
    legend columns=2, 
    ylabel={-logLikelihood},
    xlabel={Budget},
    ymax=100,
    xmin=5,
    xmax=40,
    height=.53\columnwidth,
    width=0.59\columnwidth,
    name=plot4,
    at=(plot3.right of south east), anchor=left of south west
    ]   
    \addplot [mark size =1.0, mark=diamond*, ultra thick,draw=apm, apm, mark options=solid] table [x={b}, y={gr_apm_l}]
    {\bias}; 
    \addplot  [thick,draw=apm, apm, dashed, mark options=solid] table [x={b},y={best_apm_l}]
    {\bias}; 
    \addplot  [mark size =1.0, mark=triangle*, thick,draw=mv, mv, mark options=solid] table [x={b},y={rnd_mv_l}]
    {\bias};
     \addplot  [mark size =1.0, mark=square*, thick,draw=mv, mv,  mark options=solid] table [x={b},y={best_mv_l}]
    {\bias};
     \addplot  [mark size =1.0, mark=*, thick,draw=nbap, nbap, mark options=solid] table [x={b},y={gr_nbap_l}]
    {\bias}; 
     \addplot  [mark size =1.4, mark=x, thick,densely dotted,draw=red, red,
     mark options=solid] table [x={b},y={eq_apm_l}] {\bias}; 
\end{axis}
\matrix[
	matrix of nodes,
    anchor=south,
    draw = none,
    inner sep=0.2em,
    font=\tiny \sc,
    column 1/.style={anchor=base west},
    column 2/.style={anchor=base west},
    column 3/.style={anchor=base west},
    column 4/.style={anchor=base west},
    column 5/.style={anchor=base west},
    column 6/.style={anchor=base west}
  ]at(3.9cm, 2.9cm)
  {
    \ref{plot:apm_gr} 	& APM + GREEDY 		&[5pt]
    \ref{plot:apm_best}	& APM + BEST 	&[5pt]
    \ref{plot:apm_eq}	& APM + EQUAL			 	&[5pt]\\
    \ref{plot:nbap_gr}	& NBAP + GREEDY 	&[5pt]
    \ref{plot:mv_best} 	& MV + BEST 		&[5pt]
    \ref{plot:mv_rnd}	& MV + RND 	&[5pt]\\};
\matrix[
	matrix of nodes,
    anchor=south, 
    draw = none,
    inner sep=0.2em,
    font=\tiny \sc,
    column 1/.style={anchor=base west},
    column 2/.style={anchor=base west},
    column 3/.style={anchor=base west},
    column 4/.style={anchor=base west},
    column 5/.style={anchor=base west},
    column 6/.style={anchor=base west}
  ]at(12.9cm, 2.9cm)
  {
    \ref{plot:apm_gr} 	& APM + GREEDY 		&[5pt]
    \ref{plot:apm_best}	& APM + BEST 	&[5pt]
    \ref{plot:apm_eq}	& APM + EQUAL		&[5pt]\\
    \ref{plot:nbap_gr}	& NBAP + GREEDY 	&[5pt]
    \ref{plot:mv_best} 	& MV + BEST 		&[5pt]
    \ref{plot:mv_rnd}	& MV + RND 	&[5pt]\\};
\node at (4cm, -0.8cm) [minimum width = 95 mm] {(a) MedicalQA, $B = 25$};
\node at (12.9cm, -0.8cm)[minimum width = 95 mm] {(b) MedicalQA, $p = 0.4$};
\end{tikzpicture}
\caption{Diversity and dependence impact on optimization. As the common
dependency of workers within access paths increases, investing the whole budget
on the best access path or randomly is not efficient.}
\label{fig:bias}
\end{figure*}

\smallsection{Experiment 5: Diversity impact}
This experiment is designed to study the impact of diversity and
conditional dependence on crowd access optimization, and finally answer the question:
\emph{``How does greedy optimization on the APM handle diversity?''}. One form of
such dependency is within access path correlation. If this correlation holds,
workers agree on the same answer. 
We experimented by varying the shared dependency within the access path as
follows: Given a certain probability $p$, we decide whether a vote should follow
the majority vote of existing answers in the same access path. For example, for
$p=0.4$, 40\% of the votes will follow the majority vote decision of the previous workers and the other 
60\% will be withdrawn from the real crowd votes.

Figure~\ref{fig:bias}(a) shows that the overall quality drops when dependency is high but the Access Path Model is more
robust to it. NBAP instead, due to overconfidence, accumulates all votes into a single access path which dramatically
penalizes its quality. APM+BEST applies the APM to votes selected from the access path with the best accuracy, in
our case doctors' answers.
Results show that for $p > 0.2$, it is preferable to not only select from the best access path but to distribute the
budget according to the GREEDY scheme. Figure~\ref{fig:bias}(b) shows results from the same experiment for $p=0.4$ and
varying budget. APM+GREEDY outperforms all other methods reaching a stable quality at $B=30$ which motivates the need to design techniques that can stop the
crowdsourcing process if no new insights are possible.

\subsection{Discussion}
We presented experiments based on three different and challenging crowdsourced datasets. However, our approach and our results
are of general purpose and are not tailored to any of the datasets. The main findings are:
\begin{my_list}
\item In real-world crowdsourcing the unrealistic assumption of pairwise worker
independence poses limitations to quality assurance and increases the cost of crowdsourced solutions based on individual and majority vote models. 
\item Managing and exploiting diversity with the APM ensures quality in terms of accuracy and more
significantly negative log-likelihood. Crowd access optimization schemes on top of this perspective are practical and cost-efficient.
\item Surprisingly, access plans that combine various access paths make better predictions than plans which spend the
whole budget in a single access path.
\end{my_list}
\section{Related Work}
\label{sec.relatedwork}
The reliability of crowdsourcing and relevant optimization techniques are longstanding issues for human computation
platforms. The following directions are closest to our study:

\smallsection{Quality assurance and control} One of the central works in this field is presented by
\cite{dawid1979maximum}. In an experimental design with noisy observers, the authors use an Expectation
Maximization algorithm \cite{dempster1977maximum} to obtain maximum likelihood estimates for the observer variation when
ground truth is missing or partially available. This has served as a foundation for several following contributions
\cite{ipeirotis2010quality,raykar2010learning,whitehill2009whose,zhou2012learning}, placing David and Skene's algorithm
in a crowdsourcing context and enriching it for building performance-sensitive pricing schemes. The APM model enhances
these quality definitions by leveraging the fact that the error rates of workers are directly affected by the access
path that they follow, which allows for efficient optimization.

\smallsection{Query and crowd access optimization} In crowdsourced databases, quality assurance and crowd
access optimization are envisioned as part of the query optimizer, which needs to estimate the
query plans not only according to the cost but also to their accuracy and latency. Previous work
\cite{franklin2011crowddb,marcus2011crowdsourced,parameswaran2012deco} focuses on building declarative query languages
with support for processing crowdsourced data. The proposed optimizers define the
execution order of operators in query plans and map crowdsourcable operators to micro-tasks. In our work, we
propose a complementary approach by ensuring the quality of each single operator executed by the crowd.

Crowd access optimization is similar to the expert selection problem in decision-making. However, the assumption that
the selected individuals will answer may no longer hold. Previous studies based on this assumption are
\cite{karger2011budget,ho2013adaptive,jung2013crowdsourced}. The proposed methods are nevertheless effective for task
recommendation and performance evaluation.

\smallsection{Diversity for quality} Relevant studies in management science \cite{hong2004groups,lamberson2012optimal}
emphasize diversity and define the notion of \emph{types} to refer to highly correlated forecasters. Another work that
targets groups of workers is introduced by \cite{wisdomminority}. This technique discards groups that do not prove to be
the best ones. \cite{venanzi2014community} instead, refers to groups as \emph{communities} and all of them are used for
aggregation but not for optimization. Other systems like CrowdSearcher by \cite{brambilla2014community} and CrowdSTAR by
\cite{DBLP:conf/icwe/NushiA0K15} support cross-community task allocation.
\section{Conclusion}
\label{sec.conclusion}
We introduced the Access Path Model, a novel crowd model that captures and exploits diversity as an inherent
property of large-scale crowdsourcing. This model lends itself to efficient greedy crowd access optimization. The
resulting plan has strong theoretical guarantees, since, as we prove, the information gain objective is submodular in
our model. The presented theoretical results  are of general interest and applicable to a wide range of variable selection
and experimental design problems. We evaluated our approach on three real-world crowdsourcing datasets.
Experiments demonstrate that our approach can be used to seamlessly handle critical problems in crowdsourcing
such as
quality assurance and crowd access optimization even in situations of anonymized and sparse data.

\smallsection{{\bfseries Acknowledgements}} This work was supported in part by the Swiss National Science Foundation, and
Nano-Tera.ch program as part of the Opensense II project. The authors would also like to thank Brian Galebach and Sharad
Goel for providing the Probability Sports dataset.
\small
\bibliographystyle{aaai}
\bibliography{document}
\begin{appendix}
{\allowdisplaybreaks

\section{Proof of Theorem~\ref{THMSUBM}} \label{proof.submodularity}
In order to prove Theorem~\ref{theorem.submodularity}, we will consider a generic Bayesian Network for the Access Path Model
(APM) with $N$ access paths and each access path associated with $M$ possible votes from workers. Hence, we have
following set of random variables to represent this network: 
\begin{my_list}
\item[i)]$Y$ is the random variable of the crowdsourcing task.
\item[ii)]$Z:\{Z_1,\ldots, Z_i, \ldots, Z_N\}$ are the latent random variables of the $N$ access paths.
\item[iii)]$X:\{X_{ij}\text{ for } i \in [1,\ldots, N] \text{ and } j \in [1,\ldots, M]\}$ represents a set of random variables associated with
all the workers from the access paths. 
\end{my_list}

\noindent The goal is to prove the submodularity property of the set function:
\begin{align}
f(S) = IG(S; Y) \label{proof.submodularity.1}
\end{align}
\emph{i.e.}, the information gain of $Y$ and $S \subseteq X$ w.r.t to set selection $S$, earlier referred to as \emph{access plan}. 
 We begin by proving the following Lemma~\ref{lemma.submodularity.one} that establishes the submodularity of the
 information gain in a network with one access path (\emph{i.e.}, $N=1$), denoted as $Z_1$.

\begin{lemma}\label{lemma.submodularity.one}
The set function $f(S) = IG(S; Y)$ in Equation~\ref{proof.submodularity.1} is submodular for the Bayesian Network representing
an Access Path Model with $N=1$ access path denoted by $Z_1$, associated with $M$ workers denoted by $X:\{X_{1j} \text{
for } j \in [1,\ldots, M]$\}.
\end{lemma}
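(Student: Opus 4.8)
The plan is to prove submodularity of $f(S)=I(Y;X_S)$ through its marginal gains and then exploit the Markov structure $Y\to Z_1\to\{X_{1j}\}$ of the single–access-path network. First I would use the standard reformulation: the marginal gain of adding a worker $v$ is exactly a conditional mutual information, $f(S\cup\{v\})-f(S)=I(Y;X_v\mid X_S)$, so submodularity is equivalent to the marginal gains being non-increasing, i.e. $I(Y;X_v\mid X_S)\ge I(Y;X_v\mid X_{S'})$ for all $S\subseteq S'$ and $v\notin S'$. By a routine telescoping over the elements of $S'\setminus S$, it suffices to treat the case $S'=S\cup\{u\}$, i.e. to show that conditioning on one additional worker never increases what a further worker reveals about $Y$:
\[
I(Y;X_v\mid X_S)\ \ge\ I(Y;X_v\mid X_S,X_u).
\]

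Next I would carry out the reduction announced in the proof sketch (``reduce to one access path''). For $N=1$ all workers are children of the single latent $Z_1$, hence they are mutually conditionally independent given $Z_1$ and independent of $Y$ given $Z_1$; thus $Y-Z_1-X$ is a Markov chain. Conditioning on the already-selected votes $X_S=x_S$, the posterior model over $(Y,Z_1,X_u,X_v)$ factorizes as $\tilde p(y,z_1)\,p(x_u\mid z_1)\,p(x_v\mid z_1)$, which is again a single–access-path APM with an updated joint $\tilde p$ on the top two layers and the worker channels unchanged. This collapses the statement to the two-worker \emph{base case}: in any model $Y\to Z_1\to\{X_u,X_v\}$, show $I(Y;X_v)\ge I(Y;X_v\mid X_u)$, equivalently that the interaction information $I(Y;X_u;X_v)$ is nonnegative.

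For the base case I would invoke the data-processing inequality, since the information each worker carries about $Y$ is entirely mediated by $Z_1$. Observing $X_u$ merely refines the belief about the hidden state $Z_1$, and $Y-Z_1-X_v$ remains a Markov chain even after conditioning on $X_u$ (because $Y\perp X_v\mid Z_1$). Concretely I would track the chain of bounds $I(Y;X_v\mid X_u)\le I(Z_1;X_v\mid X_u)=I(Z_1;X_v)-I(X_u;X_v)$ together with $I(Y;X_v)\le I(Z_1;X_v)$, and attempt to close the comparison by accounting for the ``gap'' $I(Z_1;X_v\mid Y)$ through the bottleneck $Z_1$.

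I expect this base case to be the main obstacle. Nonnegativity of $I(Y;X_u;X_v)$ does \emph{not} follow from the conditional independence $I(X_u;X_v\mid Z_1)=0$ alone -- interaction information can be negative under mere conditional independence -- so the argument must genuinely use the directed chain $Y\to Z_1\to\{X_u,X_v\}$ rather than symmetry. An equivalent and, I suspect, cleaner formulation is a \emph{concavity} statement: viewing $I(X_u;X_v)$ as a function of the prior $\pi$ over $Z_1$ (with the worker channels fixed), the required inequality is precisely Jensen's inequality applied with the mixing weights $\pi^{(y)}=p(z_1\mid y)$ and convex coefficients $p(y)$, so it reduces to showing this map is concave. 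Establishing that concavity -- and, if needed, isolating the mild non-degeneracy of the access-path and worker channels under which it holds -- is where I anticipate the real work to lie; once the two-worker base case is secured, the reduction and the marginal-gain telescoping deliver the full Lemma.
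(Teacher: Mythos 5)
Your reductions are all sound as far as they go: the chain-rule identity $f(S\cup\{v\})-f(S)=I(Y;X_v\mid X_S)$, the telescoping to $S'=S\cup\{u\}$, and the observation that conditioning on $X_S=x_S$ leaves a model of the same form $\tilde p(y,z_1)\,p(x_u\mid z_1)\,p(x_v\mid z_1)$ are all correct, and they correctly isolate the two-worker inequality $I(Y;X_v)\ge I(Y;X_v\mid X_u)$ as the crux. But that is precisely the step you do not prove, and it cannot be proved in the form you propose: the map $\pi\mapsto I_\pi(X_u;X_v)$ is not concave, and the interaction information $I(Y;X_u;X_v)$ can be strictly negative for a single-access-path network. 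Concretely, let $Y$ be uniform on $\{1,2\}$, let $Z_1$ be binary with $p(Z_1{=}1\mid Y{=}1)=\tfrac12$ and $p(Z_1{=}1\mid Y{=}2)=1$, and let the worker channel output $X=0$ deterministically when $Z_1=0$ and a uniform bit when $Z_1=1$. A direct computation gives $I(Y;X_v)\approx 0.0488$ bits while $I(Y;X_u,X_v)\approx 0.1058$ bits, so the second vote contributes $\approx 0.0570>0.0488$: the marginal gains \emph{increase}. (Under $Y{=}2$ the workers are pure noise while under $Y{=}1$ they are correlated through $Z_1$, so two agreeing votes are synergistic evidence --- an XOR-like effect that makes the interaction information negative.) So the per-additional-vote diminishing-returns statement you set out to establish fails for this instance, and no amount of work on the Jensen/concavity step will close the argument without extra assumptions on the channels.

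The paper's own proof never confronts this base case because it changes the lattice. Exploiting exchangeability of workers within an access path, it replaces subsets of $\{X_{11},\dots,X_{1M}\}$ by nested prefix variables $Z_{1j}=\{X_{11},\dots,X_{1j}\}$, which form a deterministic Markov chain $Y\to Z_1\to Z_{1M}\to\cdots\to Z_{11}$; any selection collapses to a single $Z_{1k}$, the marginal-gain differences telescope to $IG(Z_{1j'};Y)-IG(Z_{1j};Y)$, and the data-processing inequality (monotonicity of $j\mapsto IG(Z_{1j};Y)$) finishes the proof. Note that this establishes submodularity of the auxiliary function $g$ over the collapsed ground set $Q$ of prefixes --- a property that follows from monotonicity alone --- rather than the strictly stronger per-vote diminishing returns that your formulation (and the literal statement of the Lemma over subsets of $X$) demands, and which the counterexample above rules out in general. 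To complete a proof you would have to either adopt the paper's prefix-lattice formulation, or identify and impose conditions on the access-path and worker channels under which $I(Y;X_u;X_v)\ge 0$ genuinely holds.
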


\begin{proof}[Proof of Lemma~\ref{lemma.submodularity.one}]
Figure~\ref{fig:NBAP1n} illustrates the Bayesian Network consisdered here with one access path $Z_1$. For the sake of the proof, we consider an
alternate view of the same network as shown in Figure~\ref{fig:NBAP1nAlternate}. Here, the auxiliary variable $Z_{1j}$
denotes the set of first $j$ variables associated with workers' votes from access path $Z_1$, \emph{i.e.}, $Z_{1j} =
\{X_{11}, X_{12}, \ldots, X_{1j} \}$.
This alternate view is taken from the following generative process: $Z_1$ is first sampled given $Y$, followed by
sampling of $Z_{1M}$ from $Z_1$, where $Z_{1M} = \{X_{11}, X_{12}, \ldots, X_{1M}\}$. Given $Z_{1M}$, the remaining
$Z_{1j} \text{ } \forall j \leq M$ are just subsets of $Z_{1M}$. We define set $Q:\{Z_{1j} \text{ for } j \in [1,\ldots,
M]\}$.

One crucial property we use while considering this generative process here is that all the $X_{1j}$ are just repeated
observations of same variable associated with response of a worker from $Z_1$ access path and hence they are anonymous
and ordering does not mater. Note that, querying $j$ workers from $Z_1$, \emph{i.e.} observing $S=\{X_{11} \ldots
X_{1j}\}$  is equivalent to observing $Z_{1j}$.  Given this equivalence of the two representations of
Figure~\ref{fig:NBAP1n} and Figure~\ref{fig:NBAP1nAlternate}, we now prove the submodularity of the set function $g(A)
= IG(A; Y)$ \emph{i.e.}, the information gain of $Y$ and $A \subseteq Q$ w.r.t to set selection $A$.

Note that since $Z_{1j} \subseteq Z_{1j'} \text{ } \forall \text{ } j \leq j'$, we can alternatively write down $A$ as equivalent to the singleton set given by $\{Z_{1k}\} \text{ where } k = \operatorname*{arg\,max}_{j} {Z_{1j} \in A}$.  Also note that, function $f(S)$ and $g(A)$ have one to one equivalence given by $g(A) = f(\{X_{11} \ldots X_{1k}\}) \text{ where } k = \operatorname*{arg\,max}_{j} {Z_{1j} \in A}$.

To prove submodularity of $g$, consider sets $A \subset A' \subset Q$ and an element $q \in Q \setminus A'$. Let $A \equiv \{Z_{1j}\}$,  $A' \equiv \{Z_{1j'}\} \text{ where } j' > j$ and $q = Z_{1l} \text{ where } l > j'$.
\begin{figure}[t]
\centering
\begin{tikzpicture}

\tikzstyle{text}=[draw=none,fill=none]
\tikzstyle{main}=[circle, minimum size = 7mm, thick, draw =black!80, node distance = 16mm]
\tikzstyle{connect}=[-latex, thick]
\tikzstyle{box}=[rectangle, draw=black!100,  dotted]
  \node[main] (Y) [label=right:$Y$] { };
  
  \node[main, fill = black!10] (Z1) [below=0.7 cm of Y,label=right:$Z_1$] {};

  \node[main] (X1jx) [below=0.8 cm of Z1,label=below:$X_{1j'}$] { };
  \node[text] (A) [right=0.4 cm of X1jx,label=below:$\ldots$] { };
  \node[text] (B) [left=0.4 cm of X1jx,label=below:$\ldots$] { };
  \node[main] (X1j) [left=0.4 cm of B,label=below:$X_{1j}$] { };
  \node[text] (C) [left=0.4 cm of X1j,label=below:$\ldots$] { };
  \node[main] (X11) [left=0.4 cm of C,label=below:$X_{11}$] { };
  
  \node[main] (X1l) [right=0.4 of A,label=below:$X_{1l}$] { };
  \node[text] (D) [right=0.4 of X1l,label=below:$\ldots$] { };
  \node[main] (X1M) [right=0.4 of D,label=below:$X_{1M}$] { };
  \node[box] (W) [below=0.55 cm of Z1, label=right:, minimum width = 85 mm,
  minimum height = 15 mm]{};
  \path
        (Y) edge [connect] (Z1)
        (Z1) edge [connect] (X1M)
        (Z1) edge [connect] (X1l)
        (Z1) edge [connect] (X1jx)
        (Z1) edge [connect] (X1j)
        (Z1) edge [connect] (X11);
                                
\end{tikzpicture}
\caption{APM Model for $N=1$ access path, associated with $M$ workers}
\label{fig:NBAP1n}
\end{figure}
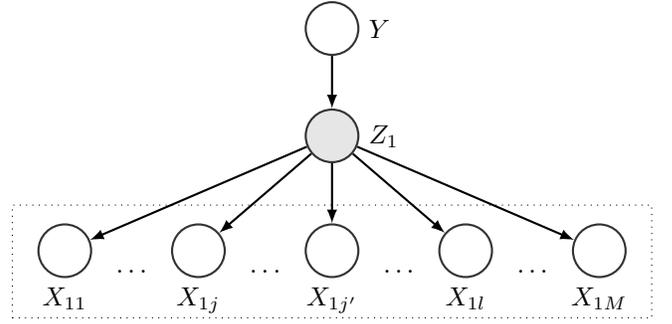
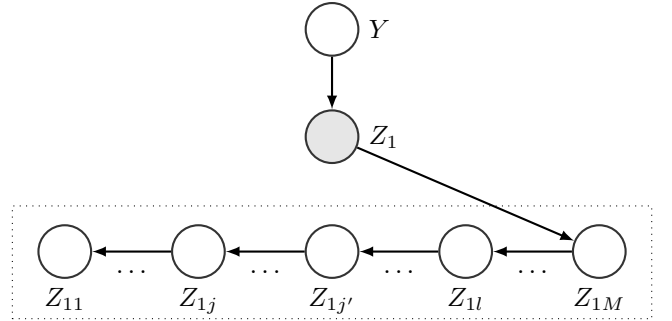
\begin{figure}[t]
\centering
\begin{tikzpicture}

\tikzstyle{text}=[draw=none,fill=none]
\tikzstyle{main}=[circle, minimum size = 7mm, thick, draw =black!80, node distance = 16mm]
\tikzstyle{connect}=[-latex, thick]
\tikzstyle{box}=[rectangle, draw=black!100,  dotted]
  \node[main] (Y) [label=right:$Y$] { };
  
  \node[main, fill = black!10] (Z1) [below=0.7 cm of Y,label=right:$Z_1$] {};

  \node[main] (Z1jx) [below=0.8 cm of Z1,label=below:$Z_{1j'}$] { };
  \node[text] (A) [right=0.4 cm of Z1jx,label=below:$\ldots$] { };
  \node[text] (B) [left=0.4 cm of Z1jx,label=below:$\ldots$] { };
  \node[main] (Z1j) [left=0.4 cm of B,label=below:$Z_{1j}$] { };
  \node[text] (C) [left=0.4 cm of Z1j,label=below:$\ldots$] { };
  \node[main] (Z11) [left=0.4 cm of C,label=below:$Z_{11}$] { };
  
  \node[main] (Z1l) [right=0.4 of A,label=below:$Z_{1l}$] { };
  \node[text] (D) [right=0.4 of Z1l,label=below:$\ldots$] { };
  \node[main] (Z1M) [right=0.4 of D,label=below:$Z_{1M}$] { };
  \node[box] (W) [below=0.55 cm of Z1, label=right:, minimum width = 85 mm,
  minimum height = 15 mm]{};
  \path
        (Y) edge [connect] (Z1)
        (Z1) edge [connect] (Z1M)
        (Z1M) edge [connect] (Z1l)
        (Z1l) edge [connect] (Z1jx)
        (Z1jx) edge [connect] (Z1j)
        (Z1j) edge [connect] (Z11);
                                
\end{tikzpicture}
\caption{APM Model for $N=1$ access path, associated with $M$ workers represented with auxiliary variables $Z_{ij}$}
\label{fig:NBAP1nAlternate}
\end{figure}
First, let us consider marginal utility of $q$ over $A$ denoted as $\Delta_g(q | A)$, given by:
\begin{align}
\Delta_g(q | A)	&= g(A \cup \{q\}) - g(A) \notag \\
			   	&= IG(A \cup \{q\}; Y) - IG(A; Y) \notag \\
				&= IG(\{Z_{1j}\} \cup \{Z_{1l}\}; Y) - IG(\{Z_{1j}\}; Y) \notag \\
				&= IG(\{Z_{1l}\}; Y) - IG(\{Z_{1j}\}; Y) \label{proof.submodularity.2} \\
				&= IG(Z_{1l}; Y) - IG(Z_{1j}; Y) \label{proof.submodularity.3} \\
				&= \Big(H(Y) - H(Y|Z_{1l})\Big) - \Big(H(Y) - H(Y|Z_{1j})\Big) \notag \\
				&= H(Y|Z_{1j}) - H(Y|Z_{1l}) \notag
\end{align}
Step~\ref{proof.submodularity.2} uses the fact that $\{Z_{1j}\} \cup \{Z_{1l}\}$ is simply equivalent to $\{Z_{1l}\}$ as
 $Z_{1j} \subset Z_{1l}$. Step~\ref{proof.submodularity.3} replaces singleton sets $\{Z_{1l}\}$ and $\{Z_{1j}\}$ by the
associated random variables $Z_{1l}$ and $Z_{1j}$. Now, to prove submodularity, we need to show that $\Delta_g(q | A) \geq
\Delta_g(q | A')$, given by:
\begin{align}
\Delta_g&(q | A) - \Delta_g(q | A')\notag\\ 
				&= \Big(H(Y|Z_{1j}) - H(Y|Z_{1l})\Big) - \Big(H(Y|Z_{1j'}) - H(Y|Z_{1l}) \Big) \notag \\
			   	&= H(Y|Z_{1j}) -H(Y|Z_{1j'}) \notag \\
				&= \Big(H(Y) - H(Y|Z_{1j'})\Big) - \Big(H(Y) - H(Y|Z_{1j})\Big)  \notag \\
				&= IG(Z_{1j'}; Y) - IG(Z_{1j}; Y)\notag  \\
				&\geq 0 \label{proof.submodularity.4}
\end{align}

Step~\ref{proof.submodularity.4} uses the ``data processing inequality" \cite{cover2012elements}, which states that
post-processing cannot increase information, or the mutual information gain between two random variables decreases with
addition of more intermediate random variables in the unidirectional network considered in
Figure~\ref{fig:NBAP1nAlternate}.
\end{proof}
Next, we use the result of Lemma~\ref{lemma.submodularity.one} to prove the results for generic networks with $N$ access paths.
\begin{proof}[Proof of Theorem~\ref{theorem.submodularity}]
We now consider a generic Bayesian Network for the Access Path Model (APM) with $N$ access paths and each access path associated with $M$ possible votes from workers. Again taking the alternate view as illustrated in Figure~\ref{fig:NBAP1nAlternate}, we define auxilliary variables $Z_{ij}$ denoting a set of first $j$ variables associated with workers' votes from access path $Z_i$, \emph{i.e.}, $Z_{ij} = \{X_{i1}, X_{i2}, \ldots, X_{ij} \}$. As before, we define set $Q:\{Z_{ij} \text{ for } i \in [1,\ldots, N] \text{ and } j \in [1,\ldots, M]\}$. The goal is to prove the submodularity over the set function $g(A) = IG(A; Y)$ \emph{i.e.}, the information gain of $Y$ and $A \subseteq Q$ w.r.t to set selection $A$. 

We define $Q_i:\{Z_{ij} \text{ for } j \in [1,\ldots, M]\} \text{ } \forall \text{ } i \in [1,\ldots, N]$, and hence we can write $Q = \cup_{i=1}^{N} Q_i$. We can similarly write $A = \cup_{i=1}^{N} A_i$ where  $A_i = A \cap Q_i$. We denote complements of $A_i$ and $Q_i$ as $A_i^c$ and $Q_i^c$ respectively, defined as follows: $Q_i^c = Q \setminus Q_i$ and $A_i^c = A \cap Q_i^c$. 

To prove the submodularity property of $g$, consider two sets $A \subset Q$, and $A' = A \cup \{s\}$, as well as an element $q \in Q
\setminus A'$. Let $q \in Q_i$. We consider following two cases:

\smallsection{Case i)} $s \in Q_i$ ($q$ and $s$ belong to the same access path.)

\noindent Note that, we can write $A = A_i \cup A_i^c$ and $A' = A'_i \cup A_i^c$, as $A$ and $A'$ differ only along
access path $i$. Also, let us denote a particular realization of the variables in set $A_i^c$ by $a_i^c$. The key idea
that we use is that for a given realization of $A_i^c$, the generic Bayesian Network with $N$ access paths can be
factorized in a similar way as with just one access path (Figure~\ref{fig:NBAP1nAlternate}), when computing the marginal gains of $q$ over $A_i$ and $A_i \cup \{s\}$.

%

Again, we need to show $\Delta_g(q | A) \geq \Delta_g(q | A')$; given by:
\begin{align}
\Delta_g(q | A) &- \Delta_g(q | A')\notag\\ 
				&= \Delta_g(q | A_i \cup A_i^c) - \Delta_g(q | A'_i \cup A_i^c) \notag \\
				&= \mathbb{E}_{a_i^c} \Big( \Delta_g(q | A_i, a_i^c) - \Delta_g(q | A'_i,  a_i^c) \Big) \label{proof.submodularity.9} \\
				&\geq 0 \label{proof.submodularity.10}
\end{align}

Step~\ref{proof.submodularity.9} considers expectation over all the possible realizations of random variables in
$A_i^c$. Step~\ref{proof.submodularity.10} uses the result of Lemma~\ref{lemma.submodularity.one} as this network for a
given realization of $A_i^c$ has the same characteristics as a single access
path network where information gain is submodular. Hence, each term inside the
expectation is non-negative, proving therefore the desired result.

Next, we consider the other case when $q$ and $s$ belong to different access paths.

\smallsection{Case ii)} $s \in Q_i^c$ ($q$ and $s$ belong to different access paths.)
\noindent First, let us consider marginal utility of $q$ over $A$ denoted as $\Delta_g(q | A)$, given by:
\begin{align}
&\Delta_g(q | A)	= g(A \cup \{q\}) - g(A) \notag \\
			   	&= IG(A \cup \{q\}; Y) - IG(A; Y) \notag \\
				&= \Big(H(A \cup \{q\}) - H(A \cup \{q\}|Y)\Big) - \Big(H(A) - H(A|Y)\Big) \notag \\
				&= \Big(H(A \cup \{q\}) - H(A)\Big) - \Big(H(A \cup \{q\}|Y) - H(A|Y)\Big) \notag \\
				&= H(q|A) - H(q|A; Y) \label{proof.submodularity.5} \\
				&= H(q|A) - H(q|A_i; Y) \label{proof.submodularity.6}
\end{align}

Step~\ref{proof.submodularity.5} simply replaces the singleton set $\{q\}$ with the random variable $q$. Step~\ref{proof.submodularity.6} uses the fact that $A = A_i \cup A_i^c$ and the conditional independence of $q$ and $A_i^c$ given $Y$.

Now, to prove submodularity, we need to show $\Delta_g(q | A) \geq \Delta_g(q | A')$, given by:
\begin{align}
&\Delta_g(q | A) - \Delta_g(q | A')\notag\\ 
				&= \Big(H(q|A) - H(q|A_i, Y)\Big) - \Big(H(q|A') - H(q|A_i, Y)\Big) \label{proof.submodularity.7} \\
				&= H(q|A)  - H(q|A')  \notag \\
				&\geq 0 \label{proof.submodularity.8}
\end{align}
Step~\ref{proof.submodularity.7} uses the conditional independence of  $q$ and $A_i^c$ given $Y$. Note that a crucial property used in this step is that $s \in A_i^c$ for this case. Step~\ref{proof.submodularity.8} follows from the ``information never hurts" principle \cite{cover2012elements} thus proving the desired result and completing the proof. 
\end{proof}
\section{Proof of Theorem~\ref{THMAPRX}} \label{proof.approximation}

\begin{proof}[Proof of Theorem~\ref{theorem.approximation}]
In order to prove Theorem~\ref{theorem.approximation}, we first consider a general submodular set function and prove the
approximation guarantees for the greedy selection scheme under the assumption that the cost to budget ratio is bounded
by $\gamma$.

Let $V$ be a collection of sets 
and consider a monotone, non-negative, submodular set function $f$ defined over $V$ as $f:2^V \rightarrow \Real$. Each element $v \in V$ is associated with a non-negative cost $c_v$. The budgeted optimization problem can be cast as:
\begin{align*}
S^* &= \operatorname*{arg\,max}_{S \subseteq V} f(S) \text{ subject to } \sum_{s \in S} c_s \leq B 
\end{align*}
Let $S^{\opt}$ be the optimal solution set for this maximization problem, which is intractable to compute
\cite{1998-_feige_threshold-of-ln-n}. Consider the generic  \greedy selection algorithm given by
Algorithm~\ref{alg:greedygeneral} and let $S^{\greedy}$ be the set returned by this algorithm.
\begin{algorithm2e}[t]
\caption{\greedy for general submodular function}
\label{alg:greedygeneral}
\footnotesize
{\bfseries Input:} budget $B$, set $V$, function $f$

{\bfseries Output:} set $S^{\greedy}$

{\bfseries Initialization:} set $S =$\small{$\emptyset$}, iterations $r=0$, size $l=0$ 

\While{$V \neq \emptyset$}{

	$v^* = \operatorname*{arg\,max}_{v \subseteq V} \big(\frac{f(S \cup {v}) - f(S)}{c_v}\big)$
	
	\If{$c(S) + c_{v^*} \leq B$}{
	
		$S = S \cup \{v^*\}$
		
		$l=l+1$
		
	}
	
	$V = V \setminus \{v^*\}$
	
	$r = r+1$   
	 
}

$S^{\greedy} = S$

{\bfseries return} $S^{\greedy}$
\end{algorithm2e}
We now analyze the performance of $\greedy$ and start by closely following the proof structure of 
\cite{khuller1999budgeted,2004-operations_sviridenko_budgeted-submodular-max}. Note that every iteration of the
Algorithm~\ref{alg:greedygeneral} can be classified along two dimensions: i) whether a selected element $v^*$ belongs to
$S^{\opt}$ or not, and ii) whether $v^*$ gets added to set $S$ or not. First, let us consider the case when $v^*$ belongs to
$S^{\opt}$, however was not added to $S$ because of violation of budget constraint. Let $r$ be the total iterations
of the algorithm so far, and $l$ be the size of $S$ at this iteration. We can renumber the elements of $V$ so that $v_i$
is the $i^{th}$ element added to $S$ for $i \in [1,\ldots, l]$ and $v_{l+1}$ is the first element from $S^{\opt}$
selected by the algorithm that could not be added to $S$. Let $S_i$ be the set obtained when first $i$ elements have
been added to $S$. Also, let $c(S)$ denote $\sum_{s \in S} c_s$. By using the result of
\cite{khuller1999budgeted,2004-operations_sviridenko_budgeted-submodular-max}, the following holds:

\begin{align}
f(S_i) - f(S_{i-1}) \geq \frac{c_i}{B} \cdot \Big( f(S^{\opt}) - f(S_{i-1}) \Big) \notag
\end{align}

Using the above result, \cite{khuller1999budgeted,2004-operations_sviridenko_budgeted-submodular-max} shows the
following through induction:
\begin{align}
f(S_l) &\geq \bigg( 1 - \prod_{j=1}^{l} \Big( 1 - \frac{c_j}{B} \Big) \bigg) \cdot f(S^{\opt}) \notag \\
	   &\geq \bigg( 1 - \Big( 1 - \sum_{j=1}^{l} \frac{c_j}{B \cdot l} \Big)^l \bigg) \cdot f(S^{\opt}) \label{proof.approximation.1} \\
       &= \bigg( 1 - \Big( 1 - \frac{c(S_l)}{B \cdot l} \Big)^l \bigg) \cdot f(S^{\opt}) \label{proof.approximation.2} 
\end{align}

In Step~\ref{proof.approximation.1}, we use the property that every function of form $\bigg( 1 - \prod_{j=1}^{l} \Big( 1 -
\frac{c_j}{B} \Big) \bigg)$ achieves its minimum at $\Big( 1 - \big( 1 - \beta \big)^l \Big)$ for $\beta =
\sum_{j=1}^{l} \frac{c_j}{B \cdot l}$.


Now, we will incorporate our assumption of bounded costs, \emph{i.e.}, $c_v \leq \gamma \cdot B \text{ } \forall v \in
V$, where $\gamma \in (0,1)$ to get the desired results. We use the fact that budget spent by
Algorithm~\ref{alg:greedygeneral} at iteration $r$ when it could not add an element to solution is at least $(B -
\operatorname*{max}_{v \subseteq V} c_v)$, which is lower-bounded by $B(1 - \gamma)$. Hence, the cost of greedy solution
set $c(S_l)$ at this iteration is at least $B(1 - \gamma)$. Incorporating this in Step~\ref{proof.approximation.2}, we
get:
\begin{align}
f(S_l) &\geq \bigg( 1 - \Big( 1 - \frac{(1 - \gamma)}{l} \Big)^l \bigg) \cdot f(S^{\opt}) \notag \\
       &= \bigg( 1 - \Big( 1 - \frac{1}{\eta} \Big)^{\eta \cdot (1 - \gamma)} \bigg) \cdot f(S^{\opt}) \text{ where } \eta = \frac{l}{(1 - \gamma)} \notag \\
       &\geq \Big(1 - \frac{1}{e^{(1-\gamma)}}\Big) \cdot f(S^{\opt}) \label{proof.approximation.3} 
\end{align}
This proves that the \greedy in Algorithm~\ref{alg:greedygeneral} achieves a utility of at least $\Big(1 -
\sfrac{1}{e^{(1-\gamma)}}\Big)$ times that obtained by optimal solution \opt. Given these results,
Theorem~\ref{theorem.approximation} follows directly given the submodularity properties of the considered optimization function.
\end{proof}
}
\end{appendix}
\balance
\end{document}